\documentclass[twoside]{article}

\usepackage[accepted]{aistats2027}
\usepackage{amsmath,amssymb,amsthm,mathtools}
\usepackage{graphicx,booktabs,xcolor}

\usepackage{booktabs}
\usepackage{multirow}

\theoremstyle{plain}
\newtheorem{theorem}{Theorem}[section]
\newtheorem{proposition}[theorem]{Proposition}
\newtheorem{lemma}[theorem]{Lemma}
\newtheorem{corollary}[theorem]{Corollary}
\theoremstyle{definition}

\begin{document}

%

%

\twocolumn[

\aistatstitle{Efficient quadratic entropy with distance sketches}

\aistatsauthor{ Steve Huntsman }

\aistatsaddress{ steve.huntsman@cynnovative.com } ]

\begin{abstract}
We detail scalable methods for approximating the quadratic entropy $p^T d p$ for arbitrary distributions $p$ and common distances $d$ of negative type. We focus on the Euclidean and spherical geodesic cases, which both use random feature embeddings and projections to dramatically improve computational complexity within a simple framework. Amortization of a single large matrix multiplication and control variates further enable computation at large scale with low memory and runtime in situations where $d$ is held constant while $p$ varies. We demonstrate this with a comparison against direct pair sampling and bibliometric/scientometric examples on Open Graph Benchmark datasets, revealing papers, fields, and institutions with both particularly narrow and broad interdisciplinary reach from their citations and text features alone.
\end{abstract}

\section{Introduction}\label{sec:Introduction}

We give scalable unbiased Monte Carlo estimators for the quadratic entropy \cite{rao1984convexity} 
\begin{equation}\label{eq:quadraticEntropy}
Q := p^T d p = \mathbb{E}_{x,x'\sim p}d(x,x')
\end{equation}
for common distances $d$ of negative type. We target the regime of millions of arbitrary distributions $p$ on the same $n$ points represented in $\mathbb{R}^m$ (hence the same $d$) for $m$ in the hundreds or more and $n$ in the hundreds of thousands or more. This regime makes explicitly forming $d$ prohibitive, but it also encourages amortization of calculations. 


\subsection{Context}\label{sec:context}

The context we are primarily concerned with is that of a Markov chain with transition matrix $P$ whose rows $p^T$ we treat as distributions to compare in light of an underlying distance $d$ (perhaps not a metric \emph{per se}, as with squared Euclidean or cosine distance) of negative type (see \S \ref{sec:negativeType}). \footnote{Multiple distributions on the same distance geometry also arise in (e.g.) ecological contexts without necessarily packaging distributions into a Markov chain.} Directed graphs with node features (e.g., the node property Open Graph Benchmark examples \cite{hu2020open}) naturally fit into this context by viewing the uniform random walk as a Markov chain. We produce examples along these lines in \S \ref{sec:experiment} that illustrate how quadratic entropy captures semantic dispersion of citations.

\subsection{Related work}\label{sec:related}

The quadratic entropy \eqref{eq:quadraticEntropy} is a self-interaction functional that is related to two-sample maximum mean discrepancy and energy distance comparison functionals of the form $(p'-p)^T d(p'-p)$ \cite{gretton2012kernel,szekely2013energy,sejdinovic2013equivalence,lyons2013distance}. Quadratic entropy is also a particularly important case of more general notions of diversity \cite{leinster2012measuring,meckes2013positive,leinster2021entropy,huntsman2025peeling}; this framing motivates our constructions, but the relevant literature is largely unconcerned with computation at scale. The simplest approach to approximating \eqref{eq:quadraticEntropy} is accumulating entries of $d$ sampled from $p \times p$. The resulting variance converges at the same $O(1/N)$ rate, but with a much worse constant when $p$ is spread out, because it does not aggregate information across entries of $d$ within a sample (see the right panel of Figure \ref{fig:varianceRatio}). While it can give better results for sufficiently concentrated data (as in the left panel of Figure \ref{fig:varianceRatio}), it cannot amortize across different distributions $p$, and does not express $Q$ through a functional linear in $p$, so it cannot be effectively deployed at scale such as in \S \ref{sec:experiment}.

Random Fourier features \cite{rahimi2007random,zhao2015fastmmd} require a shift-invariant positive definite kernel to compute quantities like \eqref{eq:quadraticEntropy}, which precludes applications to Euclidean distance by Bochner's theorem and non-integrability (cf. \cite{hertrich2025fast}). On the other hand, slicing does apply: our treatment of the Euclidean case in \S \ref{sec:Euclidean} is included mainly for the sake of a joint framework with the spherical geodesic case in \S \ref{sec:geodesic}, but the Euclidean capability \emph{per se} is largely covered by the slicing and fast summation literature \cite{bonneel2015sliced,kolouri2020generalized,hertrich2024fast,hertrich2025fast,hertrich2024generative}, including use of an unweighted form in \cite{hertrich2024generative} of the classical Gini identity Proposition \ref{pro:GiniMeanDifference}. This gives one-dimensional negative distance kernel sums in $O(n\log n)$, and slicing extends to any radial kernel with an analytic basis function \cite{hertrich2024fast}, albeit via fast Fourier summation instead of sorting. Meanwhile, quasi-Monte Carlo directions can improve slicing performance but not in our regime, due to nonsmoothness \cite{hertrich2025fast} and high dimension. 

The work closest to \S \ref{sec:Euclidean} is \cite{boufadene2025fast}, which establishes a sliced identity for the negative Euclidean distance kernel with a Bernstein-type error bound, and which yields our estimator straightforwardly. Our treatment of the Euclidean case is nevertheless distinguished from existing work by a relative error bound on $Q$ (Theorem \ref{thm:EuclideanEstimator}), and by a squared-Euclidean control variate with mean computable in $O(mn)$ (\S \ref{sec:controlVariates}).

By comparison, \S \ref{sec:geodesic} reduces the spherical geodesic quadratic entropy to a single scalar per random direction, while remaining connected with the Euclidean case. The natural kernel is the zeroth-order arc-cosine kernel of \cite{cho2009kernel}: i.e., the affine transformation $\kappa_0 = 1 - \theta/\pi$ of the geodesic distance $\theta$. This kernel's random features are the signs of random projections rather than Fourier features of the sort in \cite{rahimi2007random}: these signs realize the affine transformation $2\kappa_0 - 1 = 1 - 2\theta/\pi$ \cite{goemans1995improved} familiar from locality-sensitive hashing \cite{charikar2002similarity}. Positive definiteness is not an obstruction here: the kernel of \cite{cho2009kernel} is positive definite on $S^{m-1}$ by Schoenberg's characterization \cite{schoenberg1942positive}, as well as by the Gram matrix representation in \S \ref{sec:geodesic}. The obstructions are instead the unavailability of a spectral measure that is easy to sample from, and that the spherical harmonics involved are infeasible to work with in high dimension. A deterministic alternative uses Corollary \ref{cor:grothendieck} to compute $Q$ using an entrywise expansion in powers of the Gram matrix, but the resulting matrix rank grows prohibitively \cite{scholkopf2002learning}. 
Instead, the applicable tool is the Goemans-Williamson representation \eqref{eq:GoemansWilliamson} of spherical geodesic distance, which through a \emph{sketch} given by the product $U^T X$ of IID $\mathcal{N}(0,I_m)$ random variables in $U$ and data points in $X$, enables the amortized calculation of $s(u) = p^T \mathrm{sgn}(X^T u)$ that underlies our development. Slicing on the sphere has been used for optimal transport \cite{bonet2023spherical,quellmalz2023sliced}, but those constructions project onto great circles to compare measures, rather than linearizing the geodesic distance.

A single cached sketch $U^T X$ is the key to computing \eqref{eq:quadraticEntropy} for both Euclidean and spherical geodesic cases while amortizing over instances of $p$. 
Structured accelerations of the arc-cosine family use a similar matrix product \cite{yu2016orthogonal,choromanski2017unreasonable,lyu2017spherical,munkhoeva2018quadrature} but do not bear on our results: \cite{lyu2017spherical,munkhoeva2018quadrature} tie the feature budget $N$ to the ambient dimension $m$ and supply no finite-sample guarantee for $N \ll m$, and the orthogonal random feature constructions of \cite{yu2016orthogonal,choromanski2017unreasonable} confer limited to no advantage at our scale, where near-orthogonality is practically free. A benefit of our estimators is error bounds on quadratic entropy that are independent of ambient dimension $m$. These improve on direction counts that scale linearly in $m$ in \cite{munkhoeva2018quadrature,boufadene2025fast}. Part of this gap is attributable to normalization, since directions drawn uniformly from the sphere carry a factor $\sqrt{\pi}\,\Gamma(\tfrac{m+1}{2})/\Gamma(\tfrac{m}{2}) \sim \sqrt{\pi m/2}$ absent for Gaussian $u$;
the qualitative independence from $m$ is nonetheless not shared by any bound we compare against.
Meanwhile, {Nystr\"om} methods \cite{williams2000using,yang2012nystrom} apply to Euclidean and spherical geodesic distances, but these methods are biased and have unreliable spectral norm error \cite{drineas2005nystrom,gittens2013revisiting}.

Finally, the methods here can be used to scale the \emph{optimization} of quadratic entropy, which is useful in its own right \cite{huntsman2025peeling}. This will be reported elsewhere.

\subsection{Paper organization}\label{sec:organization}

The remainder of this paper is as follows. In \S \ref{sec:negativeType} we discuss negative type distances and feature embeddings; \S \ref{sec:computations} discusses efficient estimators for quadratic entropy; \S \ref{sec:scaling} details scaling techniques, and \S \ref{sec:experiment} describes experiments on real data. 
Appendix \ref{sec:proofs} contains proofs. The LaTeX comments of the source file on arXiv contain code necessary for exact reproduction of our results.

\section{Negative type distances and feature embeddings}\label{sec:negativeType}

Let the columns of $X \in \mathbb{R}^{m \times n}$ represent $n$ points $x^{(j)}$ in $\mathbb{R}^m$ for $j \in [n]$, i.e., $x^{(j)}_i := X_{ij}$. Recall that a symmetric nonnegative matrix $d \in \mathbb{R}^{n \times n}$ with zero diagonal is called \emph{negative type} if $a^T d a \le 0$ for all $a \in \mathbb{R}^n$ with $1^T a = 0$; $d$ is called \emph{strict negative type} if the strict inequality $a^T d a < 0$ holds for $a \ne 0$. As indicated above, the distances we shall consider are all of the form $d_{jk} \equiv d(x^{(j)},x^{(k)})$, and they are all negative type, although squared Euclidean distance and cosine distance (the latter a special case of the former) are not metrics \emph{per se}; meanwhile, Euclidean distance matrices are always strict negative type, and spherical geodesic distance matrices are strict negative type as long as no two points are antipodal \cite{hjorth1998finite}. 

In the finite setting, a classical result of Schoenberg \cite{schoenberg1938metric} is that $d$ is negative type iff there exist ``feature'' vectors $\psi^{(j)} \in \mathbb{R}^n$ such that $d_{jk} = \| \psi^{(j)} - \psi^{(k)} \|^2$ identically. This works as follows: let $J := I - \frac{1}{n}11^T$ and set $G := -\frac{1}{2} J^T d J$. When $d$ is negative type (respectively, strict negative type) $G$ is positive semidefinite (respectively, positive definite), since $b^T G b = -\frac{1}{2} a^T d a$ with $a = Jb$ automatically satisfying $1^T a = 0$ for all $b$, so that $b^T G b \ge 0$ (respectively, $b^T G b > 0$ for $b \ne 0$). A line or two of algebra gives
\begin{equation}\label{eq:gram}
G_{jj} + G_{kk} - 2G_{jk} = d_{jk}
\end{equation}
and the Cholesky factorization $G = \Psi^T \Psi$ is such that the columns $\psi^{(j)}$ of $\Psi$ are the desired feature vectors, so $G_{jk} = \psi^{(j)T} \psi^{(k)}$. Inserting this into \eqref{eq:gram} yields (with $1^T p = 1$ here and throughout the rest of the paper)
\begin{equation}\label{eq:variance}
p^T d p = 2 \left ( \sum\nolimits_j p_j \| \psi^{(j)} \|^2 - \left \| \sum\nolimits_j p_j \psi^{(j)} \right \|^2 \right ).
\end{equation}
The right side of \eqref{eq:variance} is cheaper to compute than the left side's form suggests. For the other direction, start from $d_{jk} = \| \psi^{(j)} - \psi^{(k)} \|^2$: then if $1^T a = 0$, it follows that $\sum\nolimits_{j,k} a_j a_k \| \psi^{(j)} - \psi^{(k)} \|^2 = -2\| \sum_j a_j \psi^{(j)}\|^2 \le 0$, so that $d$ is in fact negative type.

\section{Efficient quadratic entropy computations for common distances of negative type}\label{sec:computations}

Squared Euclidean (\S \ref{sec:squaredEuclidean}) and cosine (\S \ref{sec:cosine}) distances yield trivially efficient computations of quadratic entropy. For ordinary Euclidean (\S \ref{sec:Euclidean}) and spherical geodesic (\S \ref{sec:geodesic}) metrics, more finesse is required, but we can still do better than a naive evaluation of \eqref{eq:quadraticEntropy}.

\subsection{Squared Euclidean distance}\label{sec:squaredEuclidean}

This is trivial, with $\psi^{(j)} \equiv x^{(j)}$: if $d(x,y) = \left \| x - y \right \|^2$, 
\begin{equation}\label{eq:squaredEuclidean}
p^T d p = 2 \sum\nolimits_j p_j \|x^{(j)}\|^2 - 2 \left \| \sum\nolimits_j p_j x^{(j)} \right \|^2.
\end{equation}

\subsection{Cosine distance on $S^{m-1}$}\label{sec:cosine}

This is actually a special case of squared Euclidean distance, since $\frac{1}{2}(\hat x - \hat y)^T(\hat x - \hat y) = 1-\hat x^T \hat y$:
\begin{equation}\label{eq:cosine}
d(x,y) = 1-\hat x^T \hat y \Rightarrow p^T d p = 1 - \left \| \sum\nolimits_j p_j \hat x^{(j)} \right \|^2.
\end{equation}

\subsection{Euclidean distance}\label{sec:Euclidean}

\begin{proposition}\label{pro:EuclideanSketch}
For $u \sim \mathcal{N}(0,I_m)$ and $x \in \mathbb{R}^m$,
\begin{equation}\label{eq:EuclideanSketch}
    \| x \| = \sqrt{\frac{\pi}{2}} \cdot \mathbb{E}_u | u^T x |;
\end{equation}
and for $d_{jk} \equiv \| x^{(j)} - x^{(k)} \|$,
\begin{equation}\label{eq:EuclideanSketchQuadraticDiversity}
    p^T d p = \sqrt{\frac{\pi}{2}} \cdot \mathbb{E}_u \sum\nolimits_{j,k} p_j p_k | u^T x^{(j)} - u^T x^{(k)} |.
\end{equation}
\end{proposition}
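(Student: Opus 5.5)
The first identity comes from rotation invariance of the standard Gaussian. For $x \neq 0$, the scalar $u^T x$ is distributed as $\mathcal{N}(0, \|x\|^2)$, so $u^T x \overset{d}{=} \|x\|\, g$ with $g \sim \mathcal{N}(0,1)$. It then suffices to compute the half-normal mean:
\begin{equation*}
\mathbb{E}|g| = 2\int_0^\infty \frac{t}{\sqrt{2\pi}} e^{-t^2/2}\,dt = \sqrt{\frac{2}{\pi}}.
\end{equation*}
This gives $\mathbb{E}_u |u^T x| = \sqrt{2/\pi}\,\|x\|$, and rearranging yields \eqref{eq:EuclideanSketch}. The case $x = 0$ is trivial, since both sides vanish.

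For \eqref{eq:EuclideanSketchQuadraticDiversity}, I apply \eqref{eq:EuclideanSketch} with $x = x^{(j)} - x^{(k)}$ for each pair $(j,k)$. Linearity of $u^T$ gives $|u^T(x^{(j)} - x^{(k)})| = |u^T x^{(j)} - u^T x^{(k)}|$, so
\begin{equation*}
d_{jk} = \sqrt{\frac{\pi}{2}}\, \mathbb{E}_u |u^T x^{(j)} - u^T x^{(k)}|.
\end{equation*}
I then multiply by $p_j p_k$ and sum over $j,k$. The sum is finite with fixed nonnegative weights, and each expectation is finite because $\mathbb{E}|u^T x| \le \|x\|\,\mathbb{E}\|u\|/\|u\| \cdot$ (or more simply, by the first identity). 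Linearity of expectation therefore lets me move the sum inside $\mathbb{E}_u$, which is \eqref{eq:EuclideanSketchQuadraticDiversity}.

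There is no real obstacle here. The only substantive step is the distributional identity $u^T x \sim \mathcal{N}(0,\|x\|^2)$, which follows from the characteristic function or from the rotation invariance of $\mathcal{N}(0,I_m)$, together with the elementary Gaussian integral for $\mathbb{E}|g|$. Interchanging the finite sum with the expectation needs no justification beyond integrability, which the first identity already provides.
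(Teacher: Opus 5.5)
Your proposal is correct and follows the paper's proof: both use $u^T x\sim\mathcal{N}(0,\|x\|^2)$ and the half-normal mean to get the first identity, then apply it to $x^{(j)}-x^{(k)}$ and move the finite weighted sum inside the expectation. Two small points: the garbled integrability bound should read $\mathbb{E}|u^Tx|\le\|x\|\,\mathbb{E}\|u\|$ by Cauchy--Schwarz, though appealing to the first identity, as you also do, is enough; and your separate handling of $x=0$ fills a gap in the paper, whose substitution divides by $\|x\|^2$.
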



The representation \eqref{eq:EuclideanSketch} underlies slicing \cite{bonneel2015sliced}, but our application differs. We produce an unbiased estimator of the exact quadratic entropy instead of a proxy distance between distributions.

A simple trick allows us to evaluate the \emph{mean absolute difference} on the right side of \eqref{eq:EuclideanSketchQuadraticDiversity} more efficiently:

\begin{proposition}[Gini]\label{pro:GiniMeanDifference}
If $w \in \mathbb{R}^n$ and $v_1 \le \dots \le v_n$, then $ \sum\nolimits_{j,k} w_j w_k |v_j - v_k| $ equals
\begin{equation}\label{eq:GiniMeanDifference}
    2\sum\nolimits_j w_j v_j \left ( 2 \left [ \sum\nolimits_{i=1}^{j-1} w_i \right ] + w_j - \sum\nolimits_{i=1}^n w_i \right ).
\end{equation}
\end{proposition}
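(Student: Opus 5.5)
The plan is to split the double sum by the relative order of $j$ and $k$ and use the sortedness of $v$ to remove the absolute values. Diagonal terms ($j=k$) vanish. The summand $w_j w_k |v_j - v_k|$ is symmetric in $(j,k)$, so the full sum equals twice the sum over $k<j$. For $k<j$ we have $v_k \le v_j$, hence $|v_j - v_k| = v_j - v_k$. Therefore
\begin{equation*}
\sum\nolimits_{j,k} w_j w_k |v_j - v_k| = 2\sum\nolimits_{j}\sum\nolimits_{k<j} w_j w_k (v_j - v_k).
\end{equation*}
Note that the weights $w$ may have arbitrary signs; only the ordering of $v$ matters here.

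Next I would separate this into two pieces and reindex the second. The first piece is $2\sum_j w_j v_j S_{j-1}$, where $S_{j-1} := \sum_{i=1}^{j-1} w_i$. The second piece is $-2\sum_{k<j} w_j w_k v_k$. Swapping the roles of the indices, so that the outer index carries $v$, rewrites it as $-2\sum_j w_j v_j \sum_{i>j} w_i$. Collecting terms gives
\begin{equation*}
2\sum\nolimits_j w_j v_j \left(S_{j-1} - \sum\nolimits_{i=j+1}^n w_i\right).
\end{equation*}

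Finally, with $W := \sum_{i=1}^n w_i$, I would substitute $\sum_{i=j+1}^n w_i = W - S_{j-1} - w_j$. The bracket then becomes $2S_{j-1} + w_j - W$, which is exactly \eqref{eq:GiniMeanDifference}.

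There is no real obstacle. The only care needed is the bookkeeping in the index swap and checking that the boundary cases ($j=1$ with an empty prefix sum, $j=n$ with an empty suffix sum) are consistent. Ties $v_j = v_k$ are harmless, since they contribute zero under either sign convention.
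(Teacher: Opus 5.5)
Your proposal is correct and follows essentially the same route as the paper's proof: symmetrize to twice the strictly ordered sum, use sortedness to drop the absolute values, split into two pieces and reindex the second, then substitute $\sum_{i=j+1}^n w_i = W - S_{j-1} - w_j$. The only cosmetic difference is that you sum over $k<j$ where the paper sums over $j<k$.
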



Since sorting has computational complexity $O(n \log n)$, the right hand side of \eqref{eq:GiniMeanDifference} inherits this computational complexity instead of the naive $O(n^2)$.  With Proposition \ref{pro:EuclideanSketch}, this yields a very efficient algorithm for estimating $p^T d p$: sample $N$ IID vectors $u \sim \mathcal{N}(0,I_m)$ and compute the corresponding sample mean estimating the right hand side of \eqref{eq:EuclideanSketchQuadraticDiversity}.
\begin{figure*}[t]
  \centering
  \includegraphics[trim = 0mm 0mm 0mm 0mm, clip, width=\textwidth,keepaspectratio]{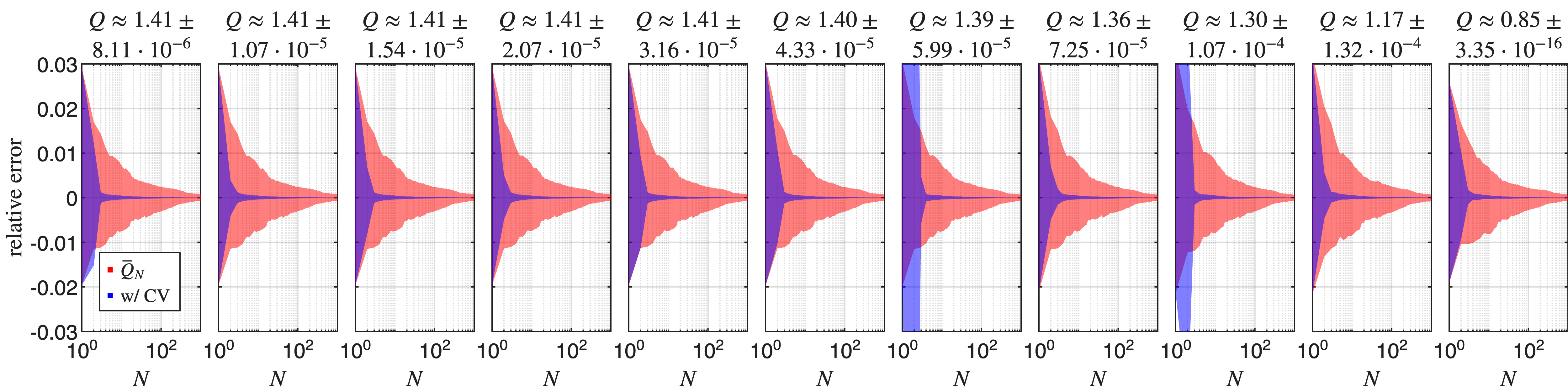}
  \caption{
  From left to right: for $\ell = 0,\dots,10$, relative errors over 100 trials for estimates of Euclidean quadratic entropies $\bar Q^\textnormal{euc}_N(p)$ with $p$ dense, given by normalizing $n$ IID $\mathcal{U}([0,1])$ random variables, and $d$ given by taking 10000 IID uniform points on $S^{m-1}$ for $m = 2^{10}$ and forcing $2^\ell$ coordinates (chosen uniformly without replacement) to be positive. {\color{red}Error for $\bar Q^\textnormal{euc}_N$ is shown in red}, while {\color{blue}error using the Euclidean control variate described in \S \ref{sec:controlVariates} is shown in blue}. Note that relative errors are virtually independent of the concentration parameter $\ell$.
  }
  \label{fig:fastQuadDivEuclidean}
\end{figure*}

Writing here 
\begin{align*}
q^\textnormal{euc}(u) & \equiv q^\textnormal{euc}(u;X,p) \\
& := \sqrt{\frac{\pi}{2}} \sum\nolimits_{j,k} p_j p_k |u^T x^{(j)} - u^T x^{(k)}|,
\end{align*}
the question now is how the estimator 
\begin{equation}\label{eq:EuclideanEstimator}
    \bar Q_N^\textnormal{euc} := \frac{1}{N} \sum_{i=1}^N q^\textnormal{euc}(u^{(i)})
\end{equation} behaves relative to $Q = p^T d p$. 
The following theorem provides an answer.

\begin{theorem}\label{thm:EuclideanEstimator}
    With IID $\mathcal{N}(0,I_m)$ inputs, $\bar Q_N^\textnormal{euc}$ is an unbiased estimator of $Q$, and
    \begin{equation}\label{eq:EuclideanEstimatorTail}
        \mathbb{P} \left ( \left | \frac{\bar Q_N^\textnormal{euc}}{Q} - 1 \right | \ge \varepsilon \right ) \le 2 \exp(-N \varepsilon^2/\pi),
    \end{equation}
    \begin{equation}\label{eq:EuclideanEstimatorVariance}
        \mathrm{Var}(\bar Q^\textnormal{euc}_N) \le \left ( \frac{\pi}{2} - 1 \right ) \frac{Q^2}{N}.
    \end{equation}
\end{theorem}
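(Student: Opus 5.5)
Unbiasedness follows at once from Proposition \ref{pro:EuclideanSketch}: each $q^\textnormal{euc}(u^{(i)})$ has mean $Q$ by \eqref{eq:EuclideanSketchQuadraticDiversity}, hence so does their average. For the two bounds, I would rewrite a single sample as a Lipschitz function of a standard Gaussian. Set $y^{(jk)} := x^{(j)} - x^{(k)}$, so that
\[
q^\textnormal{euc}(u) = \sqrt{\tfrac{\pi}{2}}\sum\nolimits_{j,k} p_j p_k |u^T y^{(jk)}|.
\]
Since $\bigl||u^Ty| - |u'^Ty|\bigr| \le \|u-u'\|\,\|y\|$ and the triangle inequality holds, $q^\textnormal{euc}$ is Lipschitz in $u$ with constant
\[
L = \sqrt{\tfrac{\pi}{2}}\sum\nolimits_{j,k} p_j p_k \|y^{(jk)}\| = \sqrt{\tfrac{\pi}{2}}\,Q .
\]
This identity, where the Lipschitz constant is exactly a multiple of $Q$, is what will make the bounds relative.

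\emph{Variance.} I would apply the Gaussian Poincar\'e inequality $\mathrm{Var}\,f(u) \le \mathbb{E}\|\nabla f(u)\|^2 \le L^2$. This gives $\mathrm{Var}\,q^\textnormal{euc} \le \frac{\pi}{2}Q^2$, which is too weak by an additive $Q^2$. The better route is the second moment. Write
\[
\mathbb{E}\bigl(q^\textnormal{euc}\bigr)^2 = \tfrac{\pi}{2}\sum p_jp_kp_{j'}p_{k'}\,\mathbb{E}\bigl|u^Ty^{(jk)}\bigr|\bigl|u^Ty^{(j'k')}\bigr|.
\]
By Cauchy--Schwarz, $\mathbb{E}|u^Ty||u^Ty'| \le \sqrt{\mathbb{E}(u^Ty)^2\,\mathbb{E}(u^Ty')^2} = \|y\|\|y'\|$. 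Hence $\mathbb{E}(q^\textnormal{euc})^2 \le \frac{\pi}{2}Q^2$, so $\mathrm{Var}\,q^\textnormal{euc} \le (\frac{\pi}{2}-1)Q^2$. Dividing by $N$ for IID samples gives \eqref{eq:EuclideanEstimatorVariance}.

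\emph{Tail.} The average $\bar Q_N^\textnormal{euc}$ is a function of the concatenated Gaussian vector $(u^{(1)},\dots,u^{(N)}) \in \mathbb{R}^{mN}$. Its Lipschitz constant is $L/\sqrt{N}$, since $\frac{1}{N}\sum_i \|u^{(i)}-u'^{(i)}\| \le \frac{1}{\sqrt N}\bigl(\sum_i\|u^{(i)}-u'^{(i)}\|^2\bigr)^{1/2}$. Gaussian concentration for Lipschitz functions (Tsirelson--Ibragimov--Sudakov) then gives
\[
\mathbb{P}\bigl(|\bar Q_N^\textnormal{euc} - Q| \ge t\bigr) \le 2\exp\bigl(-t^2 N/(2L^2)\bigr) = 2\exp\bigl(-N t^2/(\pi Q^2)\bigr).
\]
Substituting $t = \varepsilon Q$ yields exactly \eqref{eq:EuclideanEstimatorTail}, with no dependence on $m$.

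\emph{Main obstacle.} The key step is recognizing that the Lipschitz constant equals $\sqrt{\pi/2}\,Q$, which is what converts absolute concentration into relative concentration. Beyond that, I need the concentration inequality in the sharp form with constant $1/(2L^2)$ rather than a weaker sub-Gaussian constant, since any loss there would spoil the stated $1/\pi$. The degenerate case $Q=0$ also needs care: then all points coincide, $q^\textnormal{euc} \equiv 0$, and the statement is vacuous or interpreted by convention.
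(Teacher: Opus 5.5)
Your proposal is correct and follows the paper's proof essentially step for step. It uses the same Lipschitz constant $\sqrt{\pi/2}\,Q$ (via the reverse triangle inequality and Cauchy--Schwarz), the same $1/\sqrt{N}$ lift to $\mathbb{R}^{mN}$, and the same Gaussian concentration inequality with sharp constant $1/(2L^2)$. The variance step differs only cosmetically: you bound the raw second moment $\mathbb{E}(q^\textnormal{euc})^2 \le \frac{\pi}{2}Q^2$ by Cauchy--Schwarz and subtract $Q^2$, whereas the paper computes $\mathrm{Var}|u^Tv| = (1-2/\pi)\|v\|^2$ and applies Minkowski to standard deviations. The two give the identical constant because every $|u^Ty^{(jk)}|$ is a scaled half-normal with the same ratio of standard deviation to mean.
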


By Theorem \ref{thm:EuclideanEstimator}, a relative error $\varepsilon$ can be achieved with probability $1-\delta$ if 
\begin{equation}\label{eq:EuclideanEstimatorBudget}
    \frac{\pi}{\varepsilon^2} \log \frac{2}{\delta} \le N.
\end{equation}
For example, $\delta = 10^{-6}$ and $\varepsilon = \frac{1}{2}$ gives $N \ge 183$; while $\delta = 10^{-3}$ and $\varepsilon = \frac{1}{10}$ gives $N \ge 2388$. Figure \ref{fig:fastQuadDivEuclidean} and the variance bound \eqref{eq:EuclideanEstimatorVariance} both suggest that the exponential tail bound is quite loose in practice, and that small $N$ suffices for meaningful estimates.

Meanwhile, the bound \eqref{eq:EuclideanEstimatorVariance} is tight, saturating for $n = 2$ points with $p$ uniform.

\subsection{Geodesic distance on $S^{m-1}$}\label{sec:geodesic}

Instead of projecting and transporting, the spherical geodesic distance $\theta$ is linearized through random sign features that yield \eqref{eq:geodesicSketchQuadraticDiversity} below.
Write $s_j(u) := \text{sgn}(u^T x^{(j)}).$
The key technical element we need is Lemma 3.2 of \cite{goemans1995improved} (also Lemma 1.4.1 of \cite{gartner2012approximation}; cf. \cite{charikar2002similarity}), restated here:
\begin{lemma}[Goemans-Williamson]\label{lem:GoemansWilliamson}
If $u/\|u\| \sim \mathcal{U}(S^{m-1})$ (e.g., $u \sim \mathcal{N}(0,I_m)$), then
\begin{equation}\label{eq:GoemansWilliamson}
    \pi \cdot \mathbb{P} \left ( s_j(u) \ne s_k(u) \right ) = \arccos ( \hat x^{(j)T} \hat x^{(k)} ) =: \theta_{jk}.
\end{equation}
\end{lemma}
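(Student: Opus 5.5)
Since $s_j(u)$ depends on $u$ only through its direction, scale invariance lets us assume $u/\|u\|$ is uniform on $S^{m-1}$; for $u \sim \mathcal{N}(0,I_m)$ this holds by rotation invariance of the standard Gaussian. We also assume $x^{(j)}, x^{(k)} \ne 0$, so that the hats are defined. The events $u^T x^{(j)} = 0$ and $u^T x^{(k)} = 0$ have probability zero, so signs can be treated as strictly $\pm 1$.

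The plan is to reduce to two dimensions. Let $V = \mathrm{span}(\hat x^{(j)}, \hat x^{(k)})$. The signs $s_j(u)$ and $s_k(u)$ depend only on the orthogonal projection $\pi_V u$, because $u^T x = (\pi_V u)^T x$ for $x \in V$. If $u \sim \mathcal{N}(0,I_m)$, then $\pi_V u$ is a standard Gaussian in $V$, which is rotation invariant. Hence its direction is uniform on the unit circle of $V$, and it is nonzero almost surely. The degenerate case $\dim V = 1$ is handled directly: the vectors are either equal ($\theta_{jk}=0$, signs always agree) or antipodal ($\theta_{jk}=\pi$, signs always disagree). Both cases match \eqref{eq:GoemansWilliamson}.

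In the plane, write $\hat x^{(j)}, \hat x^{(k)}$ as unit vectors at angles $0$ and $\theta_{jk} \in [0,\pi]$, and let $w$ be the projected direction, at a uniform angle $\phi \in [0,2\pi)$. The sign of $w^T \hat x^{(j)}$ changes exactly when $w$ crosses the line orthogonal to $\hat x^{(j)}$, and similarly for $\hat x^{(k)}$. Equivalently, $s_j \ne s_k$ exactly when the line $w^\perp$ separates $\hat x^{(j)}$ and $\hat x^{(k)}$. This happens when $w^\perp$, whose angle mod $\pi$ is uniform on $[0,\pi)$, falls within the arc of length $\theta_{jk}$ between the two points. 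Therefore $\mathbb{P}(s_j \ne s_k) = 2\theta_{jk}/(2\pi) = \theta_{jk}/\pi$: two antipodal arcs of directions $w$, each of length $\theta_{jk}$, out of the full $2\pi$. Rearranging gives \eqref{eq:GoemansWilliamson}.

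The main point needing care is the counting of disagreeing directions. I would verify it explicitly by writing $s_j = \mathrm{sgn}\cos\phi$ and $s_k = \mathrm{sgn}\cos(\phi - \theta_{jk})$, and checking that these disagree on exactly two intervals of length $\theta_{jk}$. The Gaussian-to-uniform reduction, and the fact that the projection of a uniform point on the sphere has a uniform direction in $V$, follow from rotation invariance.
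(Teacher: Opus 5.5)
Your proof is correct. It follows essentially the same route as the source the paper relies on: the paper does not prove this lemma itself but restates Lemma 3.2 of Goemans--Williamson (also Lemma 1.4.1 of G\"artner--Matou\v{s}ek), and the standard proof there is your argument, namely reducing to the plane spanned by $\hat x^{(j)},\hat x^{(k)}$ (equivalently, measuring a spherical digon of dihedral angle $\theta_{jk}$) and counting two arcs of length $\theta_{jk}$ out of $2\pi$. One cosmetic fix: state your first step as ``the law of $(s_j,s_k)$ depends only on the law of $u/\|u\|$, so we may take $u$ Gaussian,'' since that is the direction of the reduction you actually use.
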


Taking expectations of $1-\delta_{jk} = \frac{1}{2}(1-s_j(u)s_k(u))$ now gives that 
\begin{equation}\label{eq:geodesic1}
    \theta_{jk} = \frac{\pi}{2} \mathbb{E}_u(1-s_j(u)s_k(u)).
\end{equation}

Let $\psi^{(j)} := \sqrt{\pi} \, s_j(\hat u)/2$ for $\hat u \in S^{m-1}$.
\begin{proposition}\label{pro:sphereSignEmbedding}
    $\| \psi^{(j)} - \psi^{(k)} \|_{L^2(S^{m-1})}^2 = \theta_{jk}$. 
\end{proposition}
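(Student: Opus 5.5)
The plan is to read $\psi^{(j)}$ as a function on $S^{m-1}$, $\hat u \mapsto \sqrt{\pi}\, s_j(\hat u)/2$, and to take $L^2(S^{m-1})$ with respect to the normalized uniform (probability) measure $\sigma$. With that normalization, \eqref{eq:geodesic1} already contains the answer, and what remains is to expand the square.

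\textbf{Step 1: expand the squared norm.} Write
\begin{equation*}
\| \psi^{(j)} - \psi^{(k)} \|_{L^2}^2 = \frac{\pi}{4} \int_{S^{m-1}} \bigl( s_j(\hat u) - s_k(\hat u) \bigr)^2 \, d\sigma(\hat u).
\end{equation*}
The boundary event $u^T x^{(j)} = 0$ has measure zero, assuming $x^{(j)} \neq 0$, which is implicit since $\hat x^{(j)}$ is used. So almost everywhere $s_j, s_k \in \{\pm 1\}$, and hence
\begin{equation*}
(s_j - s_k)^2 = 2(1 - s_j s_k).
\end{equation*}
The integrand is therefore $\frac{\pi}{2}(1 - s_j s_k)$.

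\textbf{Step 2: apply \eqref{eq:geodesic1}.} The sign features are scale invariant: $s_j(u) = s_j(u/\|u\|)$. Also, $u/\|u\| \sim \mathcal{U}(S^{m-1})$ for Gaussian $u$, so the expectation over $u$ in \eqref{eq:geodesic1} equals the integral against $\sigma$. Hence the integral equals $\frac{\pi}{2}\mathbb{E}_u(1 - s_j s_k) = \theta_{jk}$ by Lemma \ref{lem:GoemansWilliamson}.

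Equivalently, one can skip \eqref{eq:geodesic1} and use the integrand $\pi \cdot 1[s_j \ne s_k]$ directly, since $(s_j - s_k)^2 = 4\cdot 1[s_j \ne s_k]$; \eqref{eq:GoemansWilliamson} then gives the result immediately.

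\textbf{Main obstacle.} The only real subtlety is the normalization convention for $L^2(S^{m-1})$. The stated constant $\sqrt{\pi}/2$ matches the probability measure. With unnormalized surface measure, a factor of $|S^{m-1}|$ would appear, so I will state explicitly that $\sigma$ is normalized. The measure-zero handling of $\mathrm{sgn}(0)$ is routine.
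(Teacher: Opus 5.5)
Your proof is correct and follows the paper's argument essentially verbatim: use $s_j\in\{\pm1\}$ to get $(s_j-s_k)^2 = 2(1-s_js_k)$, then invoke \eqref{eq:geodesic1}. Your explicit remarks add two clarifications the paper leaves implicit when it writes the norm directly as $\frac{\pi}{4}\mathbb{E}_u([s_j-s_k]^2)$: that $L^2(S^{m-1})$ must carry the normalized (probability) measure, and that the hyperplane where $\mathrm{sgn}$ vanishes is null.
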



\begin{corollary}[Grothendieck\footnote{See, e.g., \S 2 of \cite{briet2010grothendieck}.}]\label{cor:grothendieck}
    $\mathbb{E}_u(s_js_k) = \frac{2}{\pi} \arcsin( \hat x^{(j)T} \hat x^{(k)})$.
\end{corollary}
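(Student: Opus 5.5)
The plan is to read the identity off equation \eqref{eq:geodesic1}, which was itself obtained from Lemma \ref{lem:GoemansWilliamson}. Since $s_j(u)s_k(u) \in \{\pm 1\}$, linearity of expectation in \eqref{eq:geodesic1} gives
\[
\theta_{jk} = \frac{\pi}{2}\bigl(1 - \mathbb{E}_u(s_js_k)\bigr), \qquad\text{so}\qquad \mathbb{E}_u(s_js_k) = 1 - \frac{2}{\pi}\theta_{jk}.
\]
Alternatively, one can start from Proposition \ref{pro:sphereSignEmbedding}. Expanding $\|\psi^{(j)}-\psi^{(k)}\|^2 = \frac{\pi}{4}\,\mathbb{E}(s_j^2 + s_k^2 - 2s_js_k) = \frac{\pi}{2}\bigl(1-\mathbb{E}(s_js_k)\bigr)$ yields the same relation. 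In either route the expectation is over $\hat u$ uniform on $S^{m-1}$, which is equivalent to Gaussian $u$ because $s_j$ depends only on $u/\|u\|$. We also use that $s_j^2 = 1$ almost surely, since $u^Tx^{(j)} = 0$ is a null event for $x^{(j)} \neq 0$.

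Next we convert arccos to arcsin. Write $t := \hat x^{(j)T}\hat x^{(k)} \in [-1,1]$. The standard identity $\arccos t = \frac{\pi}{2} - \arcsin t$ holds on all of $[-1,1]$ with the principal branches. Substituting it gives
\[
1 - \frac{2}{\pi}\Bigl(\frac{\pi}{2} - \arcsin t\Bigr) = \frac{2}{\pi}\arcsin t,
\]
which is the claim.

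No step here is a real obstacle. The only care needed is the measure-zero caveat about $\mathrm{sgn}(0)$ and the assumption that the points are nonzero so that the hats are defined. Both are implicit in Lemma \ref{lem:GoemansWilliamson}, so I would note them in a single sentence.
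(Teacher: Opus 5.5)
Your proposal is correct and is essentially the paper's argument: the paper gives no separate proof, since the corollary is meant to be read off \eqref{eq:geodesic1} (i.e., Lemma~\ref{lem:GoemansWilliamson}) as $\mathbb{E}_u(s_js_k) = 1 - 2\theta_{jk}/\pi$ combined with $\arccos t = \frac{\pi}{2} - \arcsin t$, exactly as you do. Your alternative route through Proposition~\ref{pro:sphereSignEmbedding} is fine but not independent, since that proposition's proof itself invokes \eqref{eq:geodesic1}.
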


Now define $s(u) := \sum_j p_j s_j(u) \in [-1,1]$.

\begin{theorem}\label{thm:geodesicSketchQuadraticDiversity}
    For $d$ a geodesic distance matrix on $S^{m-1}$ and $u \sim \mathcal{N}(0,I_m)$,
    \begin{equation}\label{eq:geodesicSketchQuadraticDiversity}
        p^T d p = \frac{\pi}{2} \left (1 - \mathbb{E}_u[s^2] \right ).
    \end{equation}
\end{theorem}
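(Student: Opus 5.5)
The plan is to combine the pointwise representation \eqref{eq:geodesic1} of Lemma \ref{lem:GoemansWilliamson} with linearity of expectation. Since $d_{jk} = \theta_{jk}$, summing \eqref{eq:geodesic1} against the weights $p_jp_k$ gives
\begin{equation*}
p^T d p = \sum\nolimits_{j,k} p_j p_k \theta_{jk} = \frac{\pi}{2}\, \mathbb{E}_u \sum\nolimits_{j,k} p_j p_k \bigl(1 - s_j(u)s_k(u)\bigr).
\end{equation*}
Exchanging the finite sum and the expectation is harmless, because every summand is bounded by $2$ in absolute value.

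Next, I will evaluate the inner sum for each fixed $u$. Using $1^T p = 1$, we have $\sum_{j,k} p_jp_k = 1$. Also $\sum_{j,k} p_jp_k s_js_k = (\sum_j p_j s_j)^2 = s(u)^2$. So the integrand is exactly $1 - s(u)^2$, and taking expectations yields \eqref{eq:geodesicSketchQuadraticDiversity}. Equivalently, this is \eqref{eq:variance} applied to the embedding $\psi^{(j)} = \sqrt{\pi}\,s_j/2$ of Proposition \ref{pro:sphereSignEmbedding}. There $\|\psi^{(j)}\|^2 = \pi/4$ in $L^2$ with respect to the normalized measure, so $2(\pi/4 - \frac{\pi}{4}\mathbb{E}[s^2]) = \frac{\pi}{2}(1 - \mathbb{E}[s^2])$, which serves as a consistency check.

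The only real obstacle is a technical one: the function $\mathrm{sgn}$ and the diagonal terms. When $j=k$ the summand $1 - s_j^2$ must vanish, so I need $s_j(u)^2 = 1$ almost surely. This holds because $u^T x^{(j)} = 0$ on a null set whenever $x^{(j)} \neq 0$, and the points lie on $S^{m-1}$ (or are normalized as $\hat x^{(j)}$), so this is fine. The same null-set argument justifies applying Lemma \ref{lem:GoemansWilliamson} when $u$ is Gaussian rather than uniform on the sphere: $s_j$ depends only on $u/\|u\|$, which is uniform.
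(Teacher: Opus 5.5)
Your proof is correct and takes essentially the same approach as the paper: sum \eqref{eq:geodesic1} against $p_jp_k$, move the expectation outside the finite sum, and identify $\sum_{j,k}p_jp_ks_js_k = s^2$. Your consistency check via \eqref{eq:variance} with the sign embedding is the alternative the paper gives in a footnote, and your handling of the diagonal terms and null sets covers details the paper leaves implicit.
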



Note that this result expresses the geodesic quadratic entropy very succinctly, through the scalar $s$ evaluated over random directions. This scalar can be efficiently amortized as discussed in \S \ref{sec:amortization} when $d$ is reused. Meanwhile, as in \S \ref{sec:Euclidean}, we obtain a very efficient algorithm for estimating $p^T d p$: sample $N$ IID vectors $u \sim \mathcal{N}(0,I_m)$ and compute the corresponding sample mean estimating the right hand side of \eqref{eq:geodesicSketchQuadraticDiversity}.
Figure \ref{fig:fastQuadDivGeodesic} shows an example using the algorithm.

\begin{figure*}[t]
  \centering
  \includegraphics[trim = 0mm 0mm 0mm 0mm, clip, width=\textwidth,keepaspectratio]{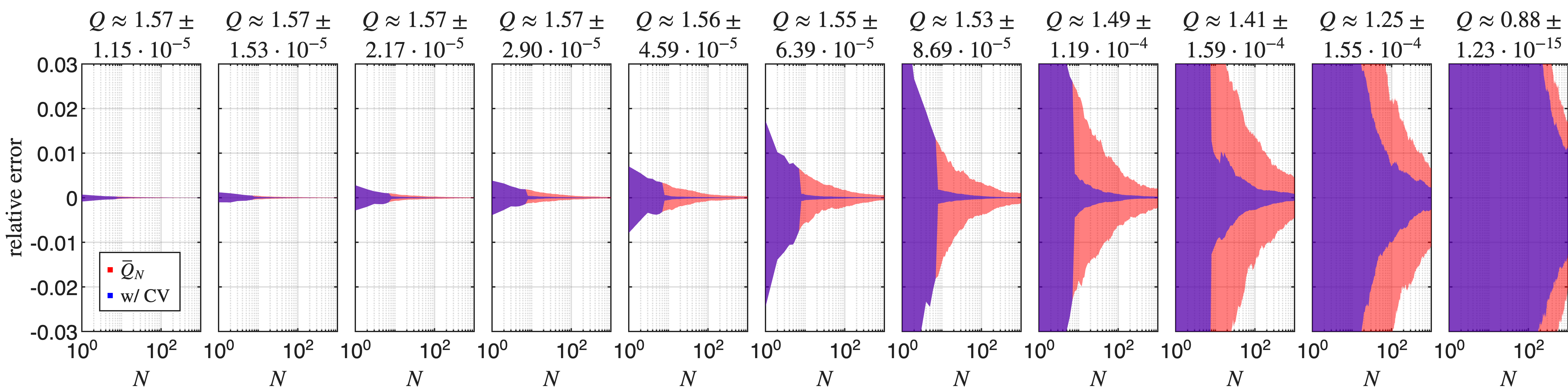}
  \caption{As in Figure \ref{fig:fastQuadDivEuclidean}, but for spherical geodesic (\emph{versus} Euclidean) distance $d$. 
  Note that as concentration increases (because the number $2^\ell$ of fixed signs on the sphere increases) from left ($\ell=0$) to right ($\ell=10$), the relative errors rapidly increase.   
  }
  \label{fig:fastQuadDivGeodesic}
\end{figure*}


Also as in \S \ref{sec:Euclidean}, if we write here 
$$q^\textnormal{geo}(u) \equiv q^\textnormal{geo}(u;X,p) := \frac{\pi}{2} \left ( 1-  s(u)^2 \right ),$$ 
the question now is how the estimator 
\begin{equation}\label{eq:geodesicEstimator}
    \bar Q_N^\textnormal{geo} := \frac{1}{N} \sum_{i=1}^N q^\textnormal{geo}(u^{(i)})
\end{equation} behaves relative to $Q := p^T d p$. The following theorem provides an answer.

\begin{theorem}\label{thm:geodesicEstimatorAbsolute}
    For $d$ a geodesic distance matrix on $S^{m-1}$ and with IID $\mathcal{N}(0,I_m)$ inputs, $\bar Q_N^\textnormal{geo}$ is an unbiased estimator of $Q$ with
    \begin{equation}\label{eq:geodesicEstimatorTail}
        \mathbb{P} \left ( \left | \bar Q_N^\textnormal{geo} - Q \right | \ge \varepsilon \right ) \le 2 \exp(-8N \varepsilon^2/\pi^2)
    \end{equation}
    and 
    \begin{equation}\label{eq:geodesicEstimatorVariance}
        \mathrm{Var}(\bar Q_N^\textnormal{geo}) \le \frac{\pi^2}{16N}.
    \end{equation}
\end{theorem}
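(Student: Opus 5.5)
Unbiasedness follows directly from Theorem \ref{thm:geodesicSketchQuadraticDiversity}. Since the $u^{(i)}$ are IID $\mathcal{N}(0,I_m)$, each summand satisfies $\mathbb{E}\,q^\textnormal{geo}(u^{(i)}) = \frac{\pi}{2}(1-\mathbb{E}_u[s^2]) = p^T d p = Q$. Averaging then gives $\mathbb{E}\,\bar Q_N^\textnormal{geo} = Q$.

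For the tail bound \eqref{eq:geodesicEstimatorTail}, I will use Hoeffding's inequality. The key observation is that $s(u) = \sum_j p_j s_j(u)$ is a convex combination of signs, so $s(u) \in [-1,1]$ and $s(u)^2 \in [0,1]$. Hence $q^\textnormal{geo}(u) \in [0,\pi/2]$ almost surely; ties $u^T x^{(j)} = 0$ occur with probability zero, and under any convention for $\mathrm{sgn}(0)$ the value stays in $[-1,1]$. Hoeffding's inequality for $N$ independent variables in an interval of length $b-a = \pi/2$ gives
\begin{equation*}
\mathbb{P}\left(|\bar Q_N^\textnormal{geo} - Q| \ge \varepsilon\right) \le 2\exp\left(-\frac{2N\varepsilon^2}{(\pi/2)^2}\right) = 2\exp\left(-\frac{8N\varepsilon^2}{\pi^2}\right),
\end{equation*}
which is exactly the claimed bound.

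For the variance bound \eqref{eq:geodesicEstimatorVariance}, I will apply Popoviciu's inequality. A random variable supported in an interval of length $L$ has variance at most $L^2/4$. With $L = \pi/2$, this gives $\mathrm{Var}(q^\textnormal{geo}(u)) \le \pi^2/16$, and independence then yields $\mathrm{Var}(\bar Q_N^\textnormal{geo}) = \mathrm{Var}(q^\textnormal{geo})/N \le \pi^2/(16N)$.

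There is no serious obstacle. The only point needing care is justifying the range $[0,\pi/2]$ of $q^\textnormal{geo}$, which both Hoeffding and Popoviciu depend on, and this rests entirely on $1^T p = 1$ with $p \ge 0$. An alternative route, for example using $Q \le \pi/2$ together with $\mathrm{Var} \le \mathbb{E}[q^2]$, would give only the weaker constant $\pi^2/4$. I will therefore use the range-based bound.
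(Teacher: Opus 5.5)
Your proposal is correct and follows the paper's proof essentially step for step: the range $q^\textnormal{geo}\in[0,\pi/2]$ gives Hoeffding's tail bound, the same range gives $\mathrm{Var}(q^\textnormal{geo})\le\pi^2/16$ (you call this Popoviciu's inequality, while the paper loosely credits Hoeffding), and then $\mathrm{Var}(\bar Q_N^\textnormal{geo})=\mathrm{Var}(q^\textnormal{geo})/N$. You also derive unbiasedness explicitly from Theorem \ref{thm:geodesicSketchQuadraticDiversity}, a step the paper's proof leaves implicit.
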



By Theorem \ref{thm:geodesicEstimatorAbsolute}, an absolute error $\varepsilon$ can be achieved with probability $1-\delta$ if 
\begin{equation}\label{eq:geodesicEstimatorBudget}
    \frac{\pi^2}{8\varepsilon^2} \log \frac{2}{\delta} \le N.
\end{equation}
For example, $\delta = 10^{-6}$ and $\varepsilon = \frac{1}{2}$ gives $N \ge 72$; while $\delta = 10^{-3}$ and $\varepsilon = \frac{1}{10}$ gives $N \ge 938$. As before, Figure \ref{fig:fastQuadDivGeodesic} and the variance bound \eqref{eq:geodesicEstimatorVariance} both suggest that the exponential tail bound is quite loose in practice, and that taking $N$ small suffices for meaningful estimates.

\subsubsection{A relative bound}\label{sec:geodesicRelative}

We can also get a relative bound on $Q$ in the geodesic case, starting with a tightened variance bound. We first collect building blocks.

\begin{proposition}\label{prop:sharpvar}
Under the same assumptions as Theorem \ref{thm:geodesicEstimatorAbsolute},
\begin{equation}\label{eq:sharpvar}
  \mathrm{Var}\bigl(\bar{Q}^{\textnormal{geo}}_N\bigr) \;\le\; \frac{1}{N}\left(\frac{\pi Q}{2} - Q^2\right)
  \;\le\; \frac{\pi^2}{16N}.
\end{equation}
\end{proposition}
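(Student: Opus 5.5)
Since the $u^{(i)}$ are IID, $\mathrm{Var}(\bar Q^{\textnormal{geo}}_N) = \mathrm{Var}(q^{\textnormal{geo}}(u))/N$, so it suffices to show $\mathrm{Var}(q^{\textnormal{geo}}(u)) \le \frac{\pi Q}{2} - Q^2$ for a single Gaussian $u$. By unbiasedness from Theorem \ref{thm:geodesicEstimatorAbsolute}, $\mathbb{E}\, q^{\textnormal{geo}}(u) = Q$, hence $\mathrm{Var}(q^{\textnormal{geo}}(u)) = \mathbb{E}[(q^{\textnormal{geo}})^2] - Q^2$. The plan is to bound the second moment by $\frac{\pi}{2}\,\mathbb{E}\, q^{\textnormal{geo}} = \frac{\pi Q}{2}$.

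The key observation is a pointwise range bound. Since $s(u) = \sum_j p_j s_j(u)$ is a convex combination of signs, $s(u) \in [-1,1]$, so $s(u)^2 \in [0,1]$. Therefore $q^{\textnormal{geo}}(u) = \frac{\pi}{2}(1 - s(u)^2) \in [0, \pi/2]$. For any random variable $Z$ with values in $[0, c]$ we have $Z^2 \le cZ$ pointwise. Taking expectations gives
\[
\mathbb{E}[(q^{\textnormal{geo}})^2] \le \frac{\pi}{2}\,\mathbb{E}\, q^{\textnormal{geo}} = \frac{\pi Q}{2}.
\]
This yields the first inequality in \eqref{eq:sharpvar}.

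For the second inequality, view $f(Q) = \frac{\pi Q}{2} - Q^2$ as a concave quadratic in $Q$. It is maximized at $Q = \pi/4$, where its value is $\frac{\pi^2}{8} - \frac{\pi^2}{16} = \frac{\pi^2}{16}$. Hence $f(Q) \le \pi^2/16$ for every real $Q$, and in particular for every admissible $Q \in [0, \pi/2]$. Dividing by $N$ gives the claim.

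There is no serious obstacle here. The only points needing care are that $s^2 \le 1$, which holds because $1^T p = 1$ and $p \ge 0$, and that we appeal to the unbiasedness already established in Theorem \ref{thm:geodesicEstimatorAbsolute}.
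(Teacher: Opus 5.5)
Your proof is correct and is essentially the paper's argument. The paper applies the same pointwise bound to $s^2 \in [0,1]$, giving $\mathbb{E}[s^4] \le \mathbb{E}[s^2]$, and then rescales via $q^{\textnormal{geo}} = \frac{\pi}{2}(1-s^2)$ and $\mathbb{E}[s^2] = 1 - 2Q/\pi$. That is just an affine change of variables from your bound $Z^2 \le cZ$ applied directly to $Z = q^{\textnormal{geo}}$, and the second inequality is handled the same way (maximizing $\frac{\pi}{2}Q - Q^2$ at $Q = \pi/4$).
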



The bound \eqref{eq:sharpvar} is tight. To see this, take $n = 2$ points at geodesic distance $\theta$ with $p$ uniform: then $s = \tfrac{1}{2}(s_1 + s_2) \in \{0,\pm 1\}$, so $s^2$ is Bernoulli with $\mathbb{P}(s^2 = 1) = \mathbb{P}(s_1 = s_2) = 1 - \theta/\pi$ by Lemma \ref{lem:GoemansWilliamson}. Since $Q = \theta/2$ here, $\mathrm{Var}(s^2) = (1 - 2Q/\pi) \cdot 2Q/\pi$, and the inequalities in the proof become equalities. For $Q \ll 1$, the bound \eqref{eq:sharpvar} is also much sharper than \eqref{eq:geodesicEstimatorVariance}.

\begin{theorem}[Bernstein; adapted from Theorem 2.10 of \cite{boucheron2013concentration}]\label{thm:blm}
Let $\zeta_1,\dots,\zeta_N$ be IID real random variables with $\mathbb{E}\zeta_i = 0$, and suppose $\nu, c > 0$ satisfy
\begin{equation}\label{eq:bcond1}
  N \mathbb{E} \left[\zeta_i^2 \right] \le \nu
\end{equation}
and
\begin{equation}\label{eq:bcond2}
  N \mathbb{E}\left[\left(\zeta_i\right)_+^{\ell}\right]
   \le \frac{\ell!}{2} \nu c^{\ell-2},
\end{equation}
for every integer $\ell \ge 3$, where $(\cdot)_+ := \max\{\cdot,0\}$. Then for all $\tau > 0$,
\begin{equation}\label{eq:bernstein}
  \mathbb{P}\left(\sum_{i=1}^{N}\zeta_i \ge \sqrt{2\nu \tau} + c\tau\right) \le e^{-\tau}.
\end{equation}
\end{theorem}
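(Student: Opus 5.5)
The plan is the Cramér--Chernoff method: bound the moment generating function of each $\zeta_i$ using the moment conditions \eqref{eq:bcond1}--\eqref{eq:bcond2}, multiply over $i$ by independence, and choose $\lambda$ explicitly so that the exponent equals $-\tau$ exactly at the threshold $\sqrt{2\nu\tau}+c\tau$.

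\textbf{Step 1: an elementary inequality.} For all real $x$,
\begin{equation*}
e^x - 1 - x \le \frac{x^2}{2} + \sum_{\ell \ge 3} \frac{(x_+)^\ell}{\ell!}.
\end{equation*}
For $x \ge 0$ this is just the Taylor series of $e^x$, with equality. For $x < 0$ the sum on the right vanishes, so it suffices to show $e^x \le 1 + x + x^2/2$. Set $g(x) = 1 + x + x^2/2 - e^x$. Then $g(0) = 0$, and $g'(x) = 1 + x - e^x \le 0$ everywhere. Hence $g$ is nonincreasing, so $g(x) \ge g(0) = 0$ for $x \le 0$. This inequality is what lets the hypothesis involve only positive parts in \eqref{eq:bcond2}, and I expect it to be the one genuinely delicate point.

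\textbf{Step 2: the MGF bound.} Fix $0 < \lambda < 1/c$. Apply Step 1 with $x = \lambda\zeta_i$ and take expectations, using $\mathbb{E}\zeta_i = 0$:
\begin{equation*}
\mathbb{E} e^{\lambda \zeta_i} \le 1 + \frac{\lambda^2}{2}\mathbb{E}\zeta_i^2 + \sum_{\ell\ge 3}\frac{\lambda^\ell}{\ell!}\mathbb{E}(\zeta_i)_+^\ell .
\end{equation*}
Now insert \eqref{eq:bcond1} and \eqref{eq:bcond2} term by term. The right side becomes at most
\begin{equation*}
1 + \frac{\nu\lambda^2}{2N}\sum_{\ell\ge 2}(c\lambda)^{\ell-2} = 1 + \frac{\nu\lambda^2}{2N(1-c\lambda)}.
\end{equation*}
Use $1+y \le e^y$ and independence to pass to the sum $S := \sum_i \zeta_i$:
\begin{equation*}
\log \mathbb{E} e^{\lambda S} \le \frac{\nu \lambda^2}{2(1-c\lambda)}.
\end{equation*}

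\textbf{Step 3: Chernoff with an explicit $\lambda$.} Markov's inequality gives, for $t>0$,
\begin{equation*}
\mathbb{P}(S \ge t) \le \exp\left(-\lambda t + \frac{\nu \lambda^2}{2(1-c\lambda)}\right).
\end{equation*}
Rather than compute the full Legendre transform, choose $\lambda$ directly. Take $t = \sqrt{2\nu\tau} + c\tau$, set $a := \sqrt{2\tau/\nu}$, and let $\lambda := a/(1+ca)$. This lies in $(0,1/c)$ and satisfies $1 - c\lambda = 1/(1+ca)$. Substituting:
\begin{equation*}
\frac{\nu\lambda^2}{2(1-c\lambda)} = \frac{\nu a^2}{2(1+ca)} = \frac{\tau}{1+ca}, \qquad \lambda t = \frac{2\tau + ca\tau}{1+ca}.
\end{equation*}
The exponent is therefore $-(\tau + ca\tau)/(1+ca) = -\tau$, which yields \eqref{eq:bernstein}.
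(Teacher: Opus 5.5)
Your proof is correct and complete. The paper gives no proof of its own and simply cites Theorem 2.10 of Boucheron--Lugosi--Massart; your argument is essentially that theorem's standard proof: $e^x-1-x\le x^2/2$ for $x\le 0$, then termwise use of the moment conditions to get $\log\mathbb{E}e^{\lambda S}\le \nu\lambda^2/(2(1-c\lambda))$, then Chernoff. The one cosmetic difference is that you pick $\lambda=a/(1+ca)$ directly instead of computing and inverting the Legendre transform of the sub-gamma bound; since $1+2ct/\nu=(1+ca)^2$ at $t=\sqrt{2\nu\tau}+c\tau$, your $\lambda$ is exactly that transform's optimizer, so the two computations coincide.
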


\begin{lemma}\label{lem:bern}
Let $\zeta_1,\dots,\zeta_N$ be IID real random variables with $\mathbb{E}\zeta_i = 0$ and $|\zeta_i| \le R$, and let $\sigma^2 > 0$ satisfy $\mathbb{E}[\zeta_i^2] \le \sigma^2$. Then for every $r > 0$, \begin{equation}\label{eq:bernmean}
  \mathbb{P} \left( \left| \tfrac{1}{N} \textstyle\sum_{i=1}^{N}\zeta_i \right| \ge r \right)
  \le 2\exp\left( \frac{-N r^2}{2\sigma^2 + 2rR/3} \right).
\end{equation}
\end{lemma}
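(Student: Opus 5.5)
The plan is to derive Lemma \ref{lem:bern} from Theorem \ref{thm:blm}, applied once to $\zeta_i$ and once to $-\zeta_i$, and then combine the two one-sided bounds with a union bound. That union bound is where the factor $2$ comes from.

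First I verify the hypotheses of Theorem \ref{thm:blm} with $\nu := N\sigma^2$ and $c := R/3$. Condition \eqref{eq:bcond1} is immediate from $\mathbb{E}[\zeta_i^2] \le \sigma^2$. For \eqref{eq:bcond2}, boundedness gives
\[
\mathbb{E}[(\zeta_i)_+^\ell] \le R^{\ell-2}\,\mathbb{E}[\zeta_i^2] \le \sigma^2 R^{\ell-2}.
\]
It therefore suffices that $R^{\ell-2} \le \tfrac{\ell!}{2}(R/3)^{\ell-2}$, i.e. $3^{\ell-2} \le \ell!/2$ for $\ell \ge 3$. This is the standard fact: it holds with equality at $\ell=3$, and induction works because the left side grows by a factor $3$ while the right side grows by $\ell+1 \ge 4$. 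The same bounds hold for $-\zeta_i$, since $|-\zeta_i| \le R$ and its second moment is unchanged.

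Next I convert \eqref{eq:bernstein} into the form of \eqref{eq:bernmean}. Set $t := Nr$, and let $\tau$ be the positive root of $\sqrt{2\nu\tau} + c\tau = t$. Rather than solving that quadratic exactly, I will use the standard inversion: with $\tau^* := t^2/(2(\nu + ct))$, one checks $\sqrt{2\nu\tau^*} + c\tau^* \le t$. Both terms are increasing in $\tau$, so $\{\sum_i\zeta_i \ge t\} \subseteq \{\sum_i\zeta_i \ge \sqrt{2\nu\tau^*}+c\tau^*\}$, and the probability is at most $e^{-\tau^*}$.

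To verify the inequality, set $a := \nu+ct$, so that $\tau^* = t^2/(2a)$. Then $\sqrt{2\nu\tau^*} = t\sqrt{\nu/a}$ and $c\tau^* = ct^2/(2a)$. The claim reduces to $\sqrt{\nu/a} + \tfrac{ct}{2a} \le 1$. Writing $x := \nu/a \in (0,1]$, we have $ct/a = 1-x$, so the claim becomes $\sqrt{x} \le (1+x)/2$, which is AM–GM.

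Substituting $\nu = N\sigma^2$, $c = R/3$ and $t = Nr$ gives
\[
\tau^* = \frac{N^2r^2}{2N\sigma^2 + 2NrR/3} = \frac{Nr^2}{2\sigma^2 + 2rR/3},
\]
which is exactly the exponent in \eqref{eq:bernmean}. Summing the bounds for the events $\{\sum_i\zeta_i \ge Nr\}$ and $\{-\sum_i\zeta_i \ge Nr\}$ gives the factor $2$.

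The only mildly delicate steps are the moment comparison $3^{\ell-2} \le \ell!/2$ and the inversion of $\tau \mapsto \sqrt{2\nu\tau}+c\tau$. Both are elementary, so I expect no real obstacle.
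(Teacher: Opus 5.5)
Your proposal is correct and follows the paper's proof essentially step for step. It uses the same choice $\nu = N\sigma^2$, $c = R/3$, the same moment comparison $(\zeta_i)_+^\ell \le R^{\ell-2}\zeta_i^2$ reducing to $2\cdot 3^{\ell-2} \le \ell!$, the same $\tau^* = (Nr)^2/(2(\nu + cNr))$ (your AM--GM step $\sqrt{x} \le (1+x)/2$ is exactly the paper's completing-the-square $\sqrt{\vartheta} + (1-\vartheta)/2 = 1 - (\sqrt{\vartheta}-1)^2/2 \le 1$), and the same union bound over $\pm\zeta_i$.
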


We are now ready to use Proposition \ref{prop:sharpvar} and Lemma \ref{lem:bern} for a relative bound.

\begin{theorem}\label{thm:geodesicEstimatorRelative}
Under the assumptions of Theorem \ref{thm:geodesicEstimatorAbsolute} with $Q \in (0,\pi/2)$,
\begin{equation}\label{eq:geobern}
  \mathbb{P}\left( \left| \frac{\bar{Q}^{\textnormal{geo}}_N}{Q} - 1 \right| \ge \varepsilon \right)
  \;\le\; 2\exp\left( \frac{-N \varepsilon^2 Q}{\pi\left(1 + \varepsilon/3\right)} \right).
\end{equation}
\end{theorem}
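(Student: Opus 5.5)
We apply Lemma \ref{lem:bern} to the centered summands $\zeta_i := q^\textnormal{geo}(u^{(i)}) - Q$, with $r := \varepsilon Q$. Then $\frac{1}{N}\sum_i \zeta_i = \bar Q^\textnormal{geo}_N - Q$, and the event $|\bar Q^\textnormal{geo}_N/Q - 1| \ge \varepsilon$ coincides with $|\frac{1}{N}\sum_i\zeta_i| \ge \varepsilon Q$. Unbiasedness (Theorem \ref{thm:geodesicEstimatorAbsolute}) gives $\mathbb{E}\zeta_i = 0$, and the summands are IID because the $u^{(i)}$ are. It remains to supply $\sigma^2$ and $R$.

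For the variance we use Proposition \ref{prop:sharpvar} in its per-sample form, namely $\mathrm{Var}(q^\textnormal{geo}(u)) \le \frac{\pi Q}{2} - Q^2$. Its proof controls $\mathrm{Var}(s^2)$, but the $1/N$ scaling there is just that of an IID mean. We then drop the $-Q^2$ term to get $\sigma^2 := \pi Q/2$.

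For the range, note that $q^\textnormal{geo}(u) = \frac{\pi}{2}(1 - s(u)^2) \in [0,\pi/2]$ since $s \in [-1,1]$, and $Q \in (0,\pi/2)$. Hence $\zeta_i \in [-Q, \pi/2 - Q]$ and $|\zeta_i| \le \max\{Q, \pi/2 - Q\} \le \pi/2$, so we take $R := \pi/2$.

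Substituting into \eqref{eq:bernmean} gives the exponent
\[
\frac{-N\varepsilon^2 Q^2}{2\cdot \frac{\pi Q}{2} + \frac{2}{3}\varepsilon Q\cdot\frac{\pi}{2}} = \frac{-N\varepsilon^2 Q^2}{\pi Q(1 + \varepsilon/3)} = \frac{-N\varepsilon^2 Q}{\pi(1+\varepsilon/3)},
\]
which is exactly \eqref{eq:geobern}.

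The main point to verify is that Proposition \ref{prop:sharpvar} really yields the single-sample bound $\mathrm{Var}(q^\textnormal{geo}) \le \pi Q/2 - Q^2$. If one must re-derive it, the argument is short. Write $q = \frac{\pi}{2}(1-s^2) \in [0,\pi/2]$. Then $q^2 \le \frac{\pi}{2} q$, so $\mathbb{E}q^2 \le \frac{\pi}{2}\mathbb{E}q = \frac{\pi}{2}Q$, and therefore $\mathrm{Var}(q) \le \frac{\pi Q}{2} - Q^2$.

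The rest is substitution. The constants work out because the range bound $R = \pi/2$ and the variance proxy $\pi Q/2$ both contribute exactly a factor $\pi$ after the $Q$ cancellation.
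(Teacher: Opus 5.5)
Your proposal is correct and is essentially the paper's proof: it applies Lemma \ref{lem:bern} to $\zeta_i = q^{\textnormal{geo}}(u^{(i)}) - Q$ with $R = \pi/2$, $r = \varepsilon Q$, and the per-sample variance bound $\pi Q/2 - Q^2$ from Proposition \ref{prop:sharpvar}. The only cosmetic difference is that you drop the $-Q^2$ term before substituting ($\sigma^2 = \pi Q/2$), whereas the paper keeps $\sigma^2 = \pi Q/2 - Q^2$, obtains the slightly sharper exponent $N\varepsilon^2 Q/(\pi - 2Q + \pi\varepsilon/3)$, and only then discards the $-2Q$.
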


%

By Theorem \ref{thm:geodesicEstimatorRelative}, a relative error $\varepsilon$ is achieved with probability $1-\delta$ once 
\begin{equation}\label{eq:geobernN}
  N \ge \frac{\pi\left(1 + \varepsilon/3\right)}{\varepsilon^2 Q}\log\frac{2}{\delta}.
\end{equation}

\subsubsection{Comparison with a pair-sampling baseline}\label{sec:versusNaive}

Figure \ref{fig:varianceRatio} compares $q^\textnormal{geo}$ against pair sampling along two axes: the geometry concentration, as in Figure \ref{fig:fastQuadDivGeodesic}; and the support size of a uniform $p$. As the concentration  $\ell$ increases, the ratio falls quickly: the sketch variance increases towards the bound in Proposition \ref{prop:sharpvar}, while the pair variance, which measures the spread of individual distances, is agnostic to concentration. As $|\mathrm{supp}(p)|$ increases, the ratio rises quickly but is always above $1$ here. 
In short, concentrating the geometry or (the support of) $p$ erodes the advantage of sketching, but only the former reverses it at a useful scale.

\begin{figure*}[t]
  \centering
  \includegraphics[trim = 0mm 0mm 0mm 0mm, clip, width=\textwidth,keepaspectratio]{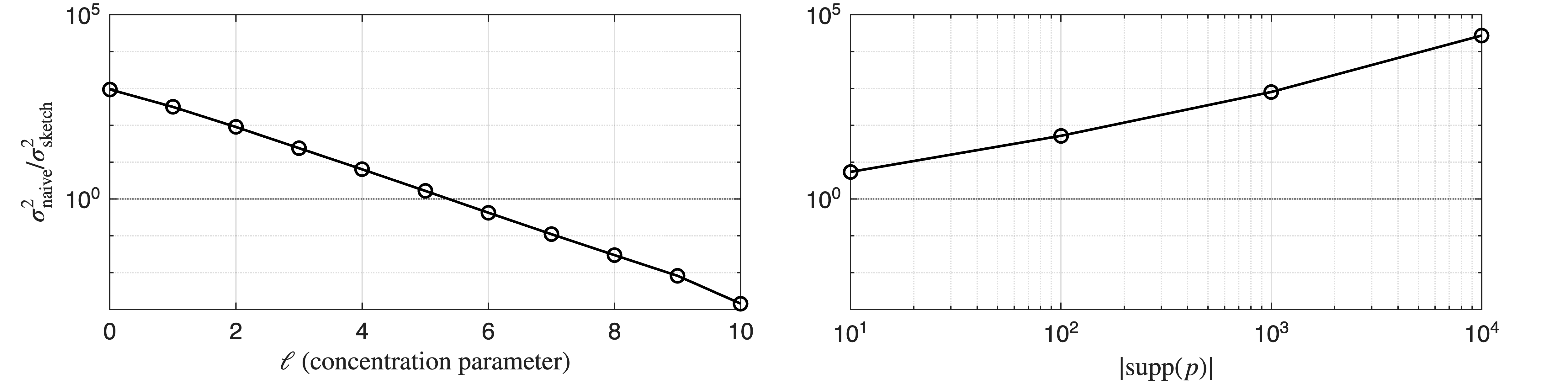}
  \caption{Both panels show the same ratio, of the per-pair variance $\sigma^2_\mathrm{naive} = p^T d^{\circ 2} p - Q^2$ (where we use the Hadamard product notation $(d^{\circ 2})_{jk} = d_{jk}^2$) of the pair-sampling estimator of \S \ref{sec:related} to the measured per-direction variance $\sigma^2_\mathrm{sketch} = \mathrm{Var}(q^\textnormal{geo}) = \frac{\pi^2}{4}\mathrm{Var}(s^2)$, for $n = 10000$ points on $S^{m-1}$ for $m = 2^{10}$. Left: $p$ dense and random, with the geometry concentrated by forcing $2^\ell$ coordinates positive as in Figure \ref{fig:fastQuadDivGeodesic}. Right: taking geometry and $p$ uniform on $\le n$ points. The panels do not agree at their common corner ($\ell = 0$ versus $|\textrm{supp}(p)| = 10000$) because $p$ is dense random on the left and exactly uniform on the right. }
  \label{fig:varianceRatio}
\end{figure*}

\section{Scaling}\label{sec:scaling}

The runtimes of the squared Euclidean, cosine distance, Euclidean, and spherical geodesic cases above are respectively $O(mn)$, $O(mn)$, $O((m+\log n)nN)$, and $O(mnN)$, each of which improves over the naive $O(mn^2)$ when as intended $N \ll n$. However, further speedups are achievable: amortization and a (Euclidean) control variate provide real advantage.

\subsection{Amortization}\label{sec:amortization}

Both Euclidean and geodesic distance estimators can (and should) reuse a single product $U^T X$ across different $p$, provided only that $X$ and $d$ are preserved. Similarly, the sort required for the Euclidean distance estimator can be done once and cached.


A naive approach requires a single $O(mn^2)$ effort to form $d$, followed by $O(n^2)$ effort for each $Q$, whereas our approach requires a single $O(mnN)$ effort to form $U^T X$ followed by $O(nN)$ for each $Q$.\footnote{
Sparsity in $p$ can further help our approach, though not uniquely, so we do not focus on this. In particular, sparsity makes the analysis of \S \ref{sec:experiment} lightweight, even for multiple random walk steps.
}

In the amortized setting, we can obtain uniform bounds on $Q$. 

\begin{lemma}\label{lem:uniformAmortizedEuclidean}
    Let $d$ be the Euclidean metric on $n$ distinct fixed points $x^{(j)} \in \mathbb{R}^m$ and $(\bar d_N)_{jk} := \sqrt{\frac{\pi}{2}} \frac{1}{N} \sum_{i=1}^N \left | u^{(i)T}(x^{(j)}-x^{(k)}) \right |$ for $u^{(i)} \sim \mathcal{N}(0,I_m)$ and $i \in [N]$. For     
    $$N \ge \frac{\pi}{\varepsilon^2} \log \frac{2\binom{n}{2}}{\delta},$$ 
    we have $\left | (\bar d_N)_{jk} - d_{jk} \right | \le \varepsilon d_{jk}$ for all $j,k$ with probability at least $1-\delta$; consequently, 
    $| \bar Q^\textnormal{euc}_N(p)/Q(p) - 1 | \le \varepsilon$ with probability at least $1-\delta$, independently of $p$.
\end{lemma}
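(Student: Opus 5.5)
The plan is to reduce to the single-pair tail bound underlying Theorem \ref{thm:EuclideanEstimator}, take a union bound over the $\binom{n}{2}$ unordered pairs, and then pass from entrywise relative error to relative error of $Q$ by positivity.

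\textbf{Single pair.} Fix a pair $j<k$ and set $y := x^{(j)}-x^{(k)} \neq 0$, using distinctness. Since $u^T y \sim \mathcal{N}(0,\|y\|^2)$, the scaled quantity $\sqrt{\pi/2}\,|u^{(i)T} y|/\|y\|$ has the law of $\sqrt{\pi/2}\,|g|$ with $g \sim \mathcal{N}(0,1)$. By \eqref{eq:EuclideanSketch} its mean is $1$. Hence $(\bar d_N)_{jk}/d_{jk}$ is an average of $N$ IID copies of $\sqrt{\pi/2}\,|g|$.

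This is exactly the $n=2$, $p$ uniform instance of $\bar Q^\textnormal{euc}_N/Q$. In that instance the common factor $p_jp_k$ cancels between $\bar Q^\textnormal{euc}_N$ and $Q$. So the tail bound \eqref{eq:EuclideanEstimatorTail} of Theorem \ref{thm:EuclideanEstimator} applies directly and gives
\[
\mathbb{P}\left(\left|(\bar d_N)_{jk}/d_{jk} - 1\right| \ge \varepsilon\right) \le 2\exp(-N\varepsilon^2/\pi).
\]
If we prefer not to lean on the theorem as a black box, the same bound follows from Gaussian concentration. The map $u \mapsto \sqrt{\pi/2}\,|u^T y|/\|y\|$ is $\sqrt{\pi/2}$-Lipschitz, so the sample mean is $\sqrt{\pi/(2N)}$-Lipschitz in $(u^{(1)},\dots,u^{(N)})$. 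The Gaussian concentration inequality then gives the tail $2\exp(-\varepsilon^2 N/\pi)$.

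\textbf{Union bound.} The diagonal entries are trivially exact, since both sides are $0$, and the matrices are symmetric. So only the $\binom{n}{2}$ unordered pairs matter. A union bound gives a failure probability of at most $2\binom{n}{2}\exp(-N\varepsilon^2/\pi)$, which is at most $\delta$ under the stated lower bound on $N$. On the complementary event, $|(\bar d_N)_{jk}-d_{jk}| \le \varepsilon d_{jk}$ holds for all $j,k$ simultaneously.

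\textbf{Passage to $Q$.} By Proposition \ref{pro:EuclideanSketch} and linearity, $\bar Q^\textnormal{euc}_N(p) = p^T \bar d_N p$. Since $p \ge 0$ has $p_jp_k \ge 0$, we get
\[
|p^T\bar d_N p - p^T d p| \le \sum\nolimits_{j,k} p_jp_k|(\bar d_N)_{jk}-d_{jk}| \le \varepsilon\, p^T d p.
\]
Dividing by $Q>0$ finishes the argument; we assume $Q>0$, which holds whenever $p$ has at least two support points. Because this holds on a single event that does not depend on $p$, the bound is uniform over all $p$ simultaneously.

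The main point requiring care is the single-pair tail: I need to confirm that invoking \eqref{eq:EuclideanEstimatorTail} at $n=2$ is legitimate, or else supply the Lipschitz-concentration argument. The remaining steps are routine.
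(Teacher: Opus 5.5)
Your proposal is correct and follows essentially the same route as the paper: a per-pair Gaussian concentration bound, then a union bound over the $\binom{n}{2}$ unordered pairs, then nonnegativity of $p_jp_k$ to pass from entrywise relative error to relative error in $Q$. The paper normalizes to $g_i \sim \mathcal{N}(0,1)$, uses the $1/\sqrt{N}$-Lipschitz map $F(g)=\frac{1}{N}\sum_i |g_i|$, and then rescales by $\sqrt{\pi/2}$; this is your direct $\sqrt{\pi/(2N)}$-Lipschitz argument in normalized form. Your alternative shortcut through Theorem~\ref{thm:EuclideanEstimator} is also legitimate, because at $n=2$ with uniform $p$ the ratio $\bar Q^\textnormal{euc}_N/Q$ coincides exactly with $(\bar d_N)_{jk}/d_{jk}$.
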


A similar result for spherical geodesic distance is easier, but weaker:

\begin{lemma}\label{lem:uniformAmortizedGeodesic}
    Let $\theta$ be the spherical geodesic metric on $n$ distinct fixed points in $S^{m-1}$, satisfying $\theta_{jk} = \frac{\pi}{2} \mathbb{E}_u(1-s_j(u)s_k(u))$ for $u \sim \mathcal{N}(0,I)$. For $$N \ge \frac{\pi^2}{2\varepsilon^2} \log \frac{2\binom{n}{2}}{\delta},$$ 
    $\left | \bar Q^\textnormal{geo}_N(p) - Q(p) \right | \le \varepsilon \cdot (1-\|p\|^2) \le \varepsilon$ with probability $1-\delta$.
\end{lemma}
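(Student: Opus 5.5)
The plan is to prove entrywise uniform concentration of an estimated geodesic distance matrix, and then push it through the quadratic form. Define
$$(\bar\theta_N)_{jk} := \frac{\pi}{2}\cdot\frac{1}{N}\sum_{i=1}^N \bigl(1 - s_j(u^{(i)})s_k(u^{(i)})\bigr).$$
This estimator has zero diagonal, and by \eqref{eq:geodesic1} its off-diagonal entries are unbiased for $\theta_{jk}$.

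First I would check that $\bar Q^\textnormal{geo}_N(p) = p^T \bar\theta_N p$. Using $s_j^2 = 1$ and $1^T p = 1$, we have
$$\sum_{j,k} p_j p_k \bigl(1 - s_j s_k\bigr) = 1 - s(u)^2$$
for each direction $u$. Averaging over the $N$ directions gives the identity exactly.

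Next, fix an unordered pair $j<k$. The summands $\frac{\pi}{2}\bigl(1 - s_j(u^{(i)})s_k(u^{(i)})\bigr)$ are IID and take values in the interval $[0,\pi]$. Hoeffding's inequality then gives
$$\mathbb{P}\bigl(|(\bar\theta_N)_{jk}-\theta_{jk}|\ge\varepsilon\bigr)\le 2\exp\bigl(-2N\varepsilon^2/\pi^2\bigr).$$
A union bound over the $\binom{n}{2}$ pairs bounds the total failure probability by $\delta$. For this I need
$$N \ge \frac{\pi^2}{2\varepsilon^2}\log\frac{2\binom{n}{2}}{\delta},$$
which is exactly the hypothesis. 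Diagonal entries contribute no error, since both matrices have zero diagonal.

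On the good event, I would bound the error of the quadratic form by the off-diagonal mass of $p$:
$$|p^T(\bar\theta_N-\theta)p| \le \varepsilon\sum_{j\ne k}p_jp_k = \varepsilon\,(1-\|p\|^2) \le \varepsilon.$$
This holds simultaneously for all $p$, because the event depends only on the fixed points and the directions $u^{(i)}$.

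The only step needing care is the Hoeffding range. The summand takes only the two values $0$ and $\pi$, so the range is $\pi$, giving the exponent $2N\varepsilon^2/\pi^2$ and hence the stated constant $\pi^2/2$. I would also note explicitly that the distinctness hypothesis on the points is not needed for the argument.
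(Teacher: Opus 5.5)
Your proposal is correct and is essentially the paper's own proof: Hoeffding on the $\{0,\pi\}$-valued per-pair summands, a union bound over the $\binom{n}{2}$ pairs, and then bounding $|p^T(\bar\theta_N-\theta)p|$ by $\varepsilon\sum_{j\ne k}p_jp_k = \varepsilon(1-\|p\|^2)$. Your explicit check that $\bar Q^{\textnormal{geo}}_N(p) = p^T\bar\theta_N p$, and that both matrices have zero diagonal, fills in a step the paper leaves implicit.
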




\subsection{Control variates}\label{sec:controlVariates}

Recall the idea of a \emph{control variate} \cite{asmussen2007stochastic}. Here, this means a random variable $W$ that is strongly correlated with $\bar Q_N^\bullet$ (here $\bullet$ is a placeholder indicating the Euclidean or spherical geodesic cases) and has a known mean $W_*$. In this context, consider the estimator $\bar Q_N^\bullet + \alpha(\bar W-W_*)$, where $\bar W$ is the sample mean of $W$. This estimator is also an unbiased estimator of $Q$, and its variance is $\text{Var}(\bar Q_N^\bullet) + \alpha^2 \cdot \text{Var}(\bar W) + 2 \alpha \cdot \text{Cov}(\bar Q_N^\bullet,\bar W)$. This variance is minimized for $\alpha = -\text{Cov}(\bar Q_N^\bullet, \bar W)/\text{Var}(\bar W)$, with minimum 
$$\text{Var}(\bar Q_N^\bullet) \cdot \left ( 1-\frac{\text{Cov}(\bar Q_N^\bullet,\bar W)^2}{\text{Var}(\bar Q_N^\bullet) \text{Var}(\bar W)} \right ).$$
The optimal $\alpha$ is estimated by sample (co)variances; leave-one-out cross-validation avoids a $O(1/N)$ bias that could be otherwise inherited from dependence on $\bar W$ and manifests in analogues of Figures \ref{fig:fastQuadDivEuclidean} and \ref{fig:fastQuadDivGeodesic} for large $\ell$ (not shown).

A useful control variate for the Euclidean distance quadratic entropy is 
\begin{align*}
w^\textnormal{euc}(u) & := \sum\nolimits_{j,k} p_j p_k (u^T x^{(j)} - u^T x^{(k)})^2 \\
& = 2 \sum\nolimits_j p_j (u^T x^{(j)})^2 - 2\left ( u^T \sum\nolimits_j p_j x^{(j)} \right )^2.
\end{align*}
To see this, note that 
$$\mathbb{E}_u([u^T x]^2) = \mathbb{E}_u(x^T u u^T x) = x^T \mathbb{E}_u (u u^T) x = \| x \|^2$$ 
and 
$$\mathbb{E}_u \left ( \left [u^T\sum\nolimits_j p_j x^{(j)} \right ]^2 \right ) = \left \| \sum\nolimits_j p_j x^{(j)} \right \|^2,$$
so that $\mathbb{E}_u w^\textnormal{euc}$ is given by the squared Euclidean quadratic entropy \eqref{eq:squaredEuclidean}, which is computable in $O(mn)$ \emph{versus} $O(mnN)$. Figure \ref{fig:fastQuadDivEuclidean} shows its application {\color{blue}in blue}.

A control variate for the geodesic distance case is 
$w^\textnormal{geo}(u) := ( \hat u^T \sum\nolimits_j p_j x^{(j)} )^2$,
which has
$\mathbb{E}_u w^\textnormal{geo} = \frac{1}{m} \| \sum\nolimits_j p_j x^{(j)} \|^2$. However, this is less effective for concentrated data than in the Euclidean case, because in that regime working with signs alone throws away too much information.
Figure \ref{fig:fastQuadDivGeodesic} shows its application {\color{blue}in blue}.

\section{Examples at scale}\label{sec:experiment}

As a demonstration/experiment of the sort familiar at smaller scales in the bibliometric and scientometric literature \cite{stirling2007general,porter2009science,cassi2017analysing,park2026interdisciplinary}, we compute quadratic entropies for a Markov chain corresponding to a uniform random walk of citations on \texttt{ogbn-arxiv}, a citation digraph of $n=169343$ arXiv papers with $1166243$ arcs, $m=128$ skip-gram title/abstract features per node, and a label for the arXiv subject class \cite{hu2020open}. We write $p_{(1)}^{(j)}$ for the one-step distribution of the $j$th paper, treated as a column vector; we place unit mass on $j$ for any papers without citations. We also consider the spherical geodesic metric on (normalized) features. If $\chi_j$ papers cite $j$ uniformly across subject classes, then the corresponding Shannon entropy is $H = \log \chi_j$, whereas $Q = (1-1/\chi_j)\bar \theta_j$, where $\bar \theta_j$ is the average distance between distinct citing papers: this measures dispersion, and not just cardinality. This also suggests a normalization that controls for citation counts and focuses on geometry.

Figure \ref{fig:dispersionNormalized} is the result of applying this normalization to the data in Figure \ref{fig:dispersion}. In both figures, we stratify the citation count $\chi$ into the bands $\chi \in [10,20)$, $[20,50)$, $[50,100)$, and $[100,500)$, so that the Shannon entropy $H$ is approximately constant within each band 
while we allow the quadratic entropy to vary. Meanwhile, we consider other proxies for the dispersion of papers' readership: the number of distinct arXiv subject areas represented among citing papers (the left panels of Figures \ref{fig:dispersion} and \ref{fig:dispersionNormalized}) and the Shannon entropy of the citing papers (the right panels of the same figures). We discard cells with fewer than $30$ papers. The figures show that the feature-based quadratic entropy does capture a ground-truth (defined by arXiv subject area) semantic dispersion of citing papers, while controlling for the citation count $\chi$. Finally, Table \ref{tab:reach-exemplars} shows some specific papers with particularly wide or narrow reach.

\begin{figure}[htbp]
  \centering
  \includegraphics[trim = 0mm 0mm 0mm 0mm, clip, width=\columnwidth,keepaspectratio]{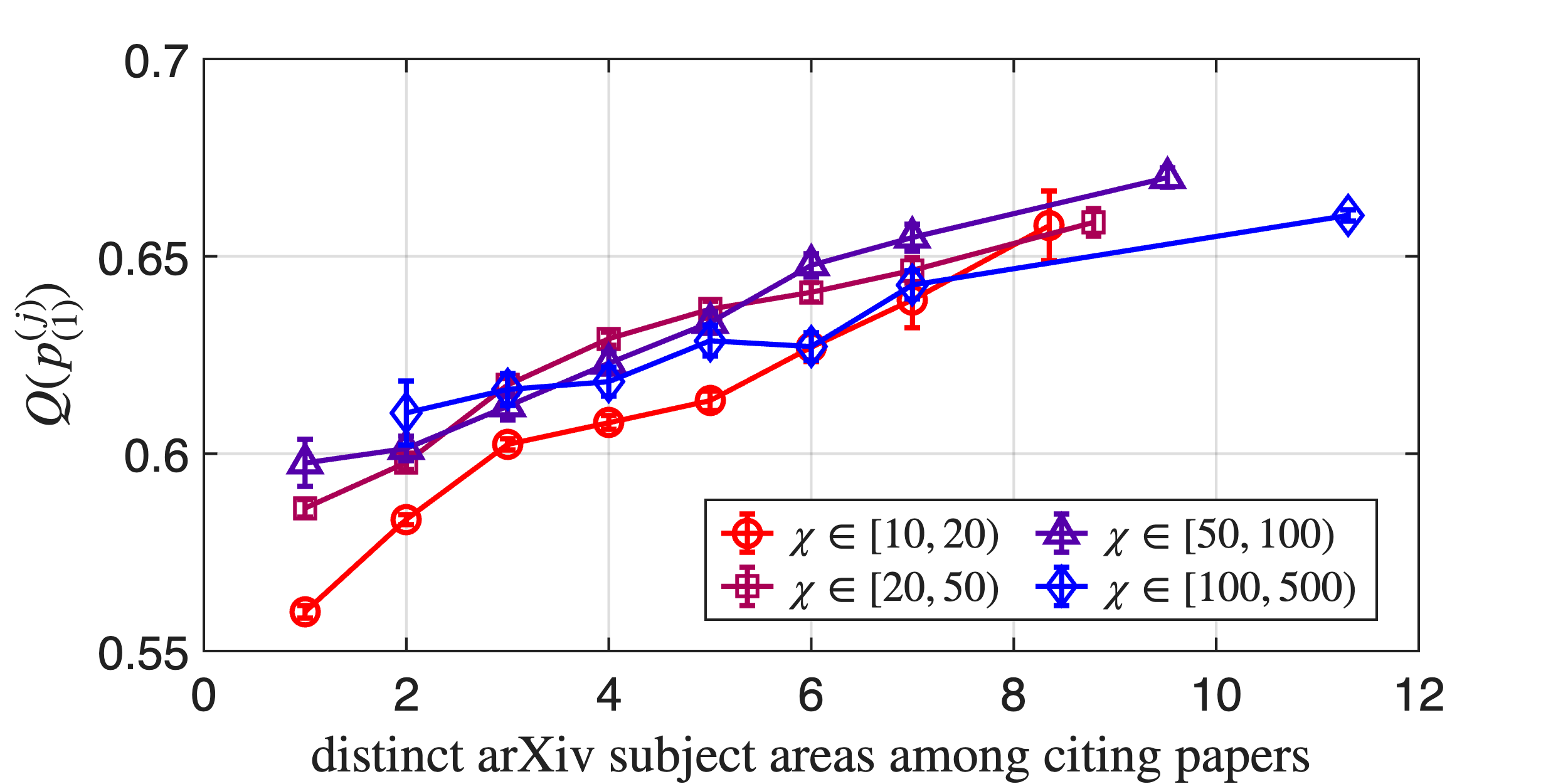}
  
  \includegraphics[trim = 0mm 0mm 0mm 0mm, clip, width=\columnwidth,keepaspectratio]{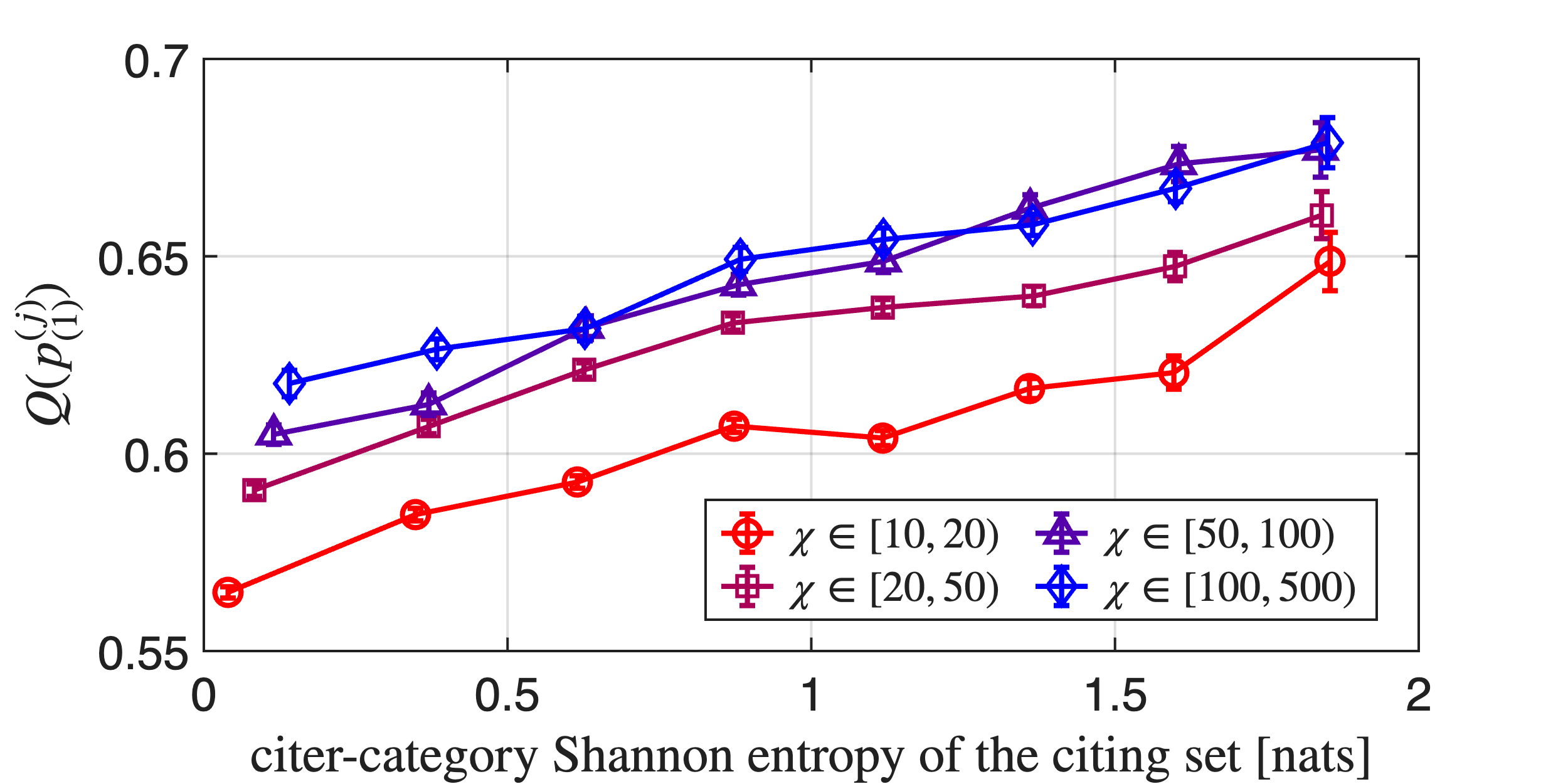}
  \caption{Top: (estimated) quadratic entropies of the (one-step) distribution $p^{(j)}_{(1)}$ of citing papers for various bands of citation count $\chi$ and as a function of distinct subject areas of citing papers. Bottom: the same quadratic entropies, but as functions of Shannon entropy for the arXiv class distribution. Note that $d$ is agnostic to the arXiv class data and only depends on the skip-gram feature vectors for titles and abstracts.}
  \label{fig:dispersion}
\end{figure}

\begin{figure}[htbp]
  \centering
  \includegraphics[trim = 0mm 0mm 0mm 0mm, clip, width=\columnwidth,keepaspectratio]{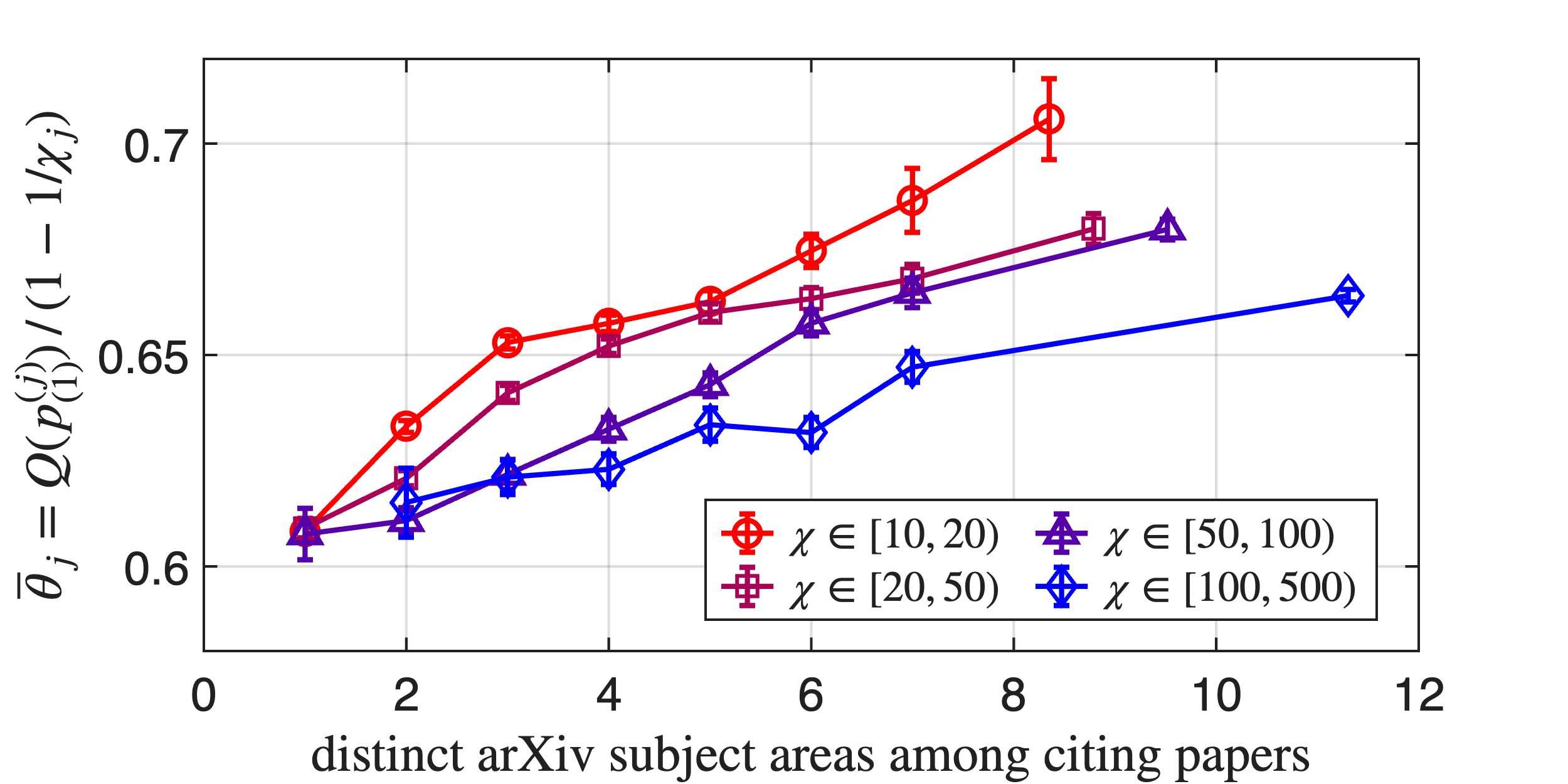}
  \includegraphics[trim = 0mm 0mm 0mm 0mm, clip, width=\columnwidth,keepaspectratio]{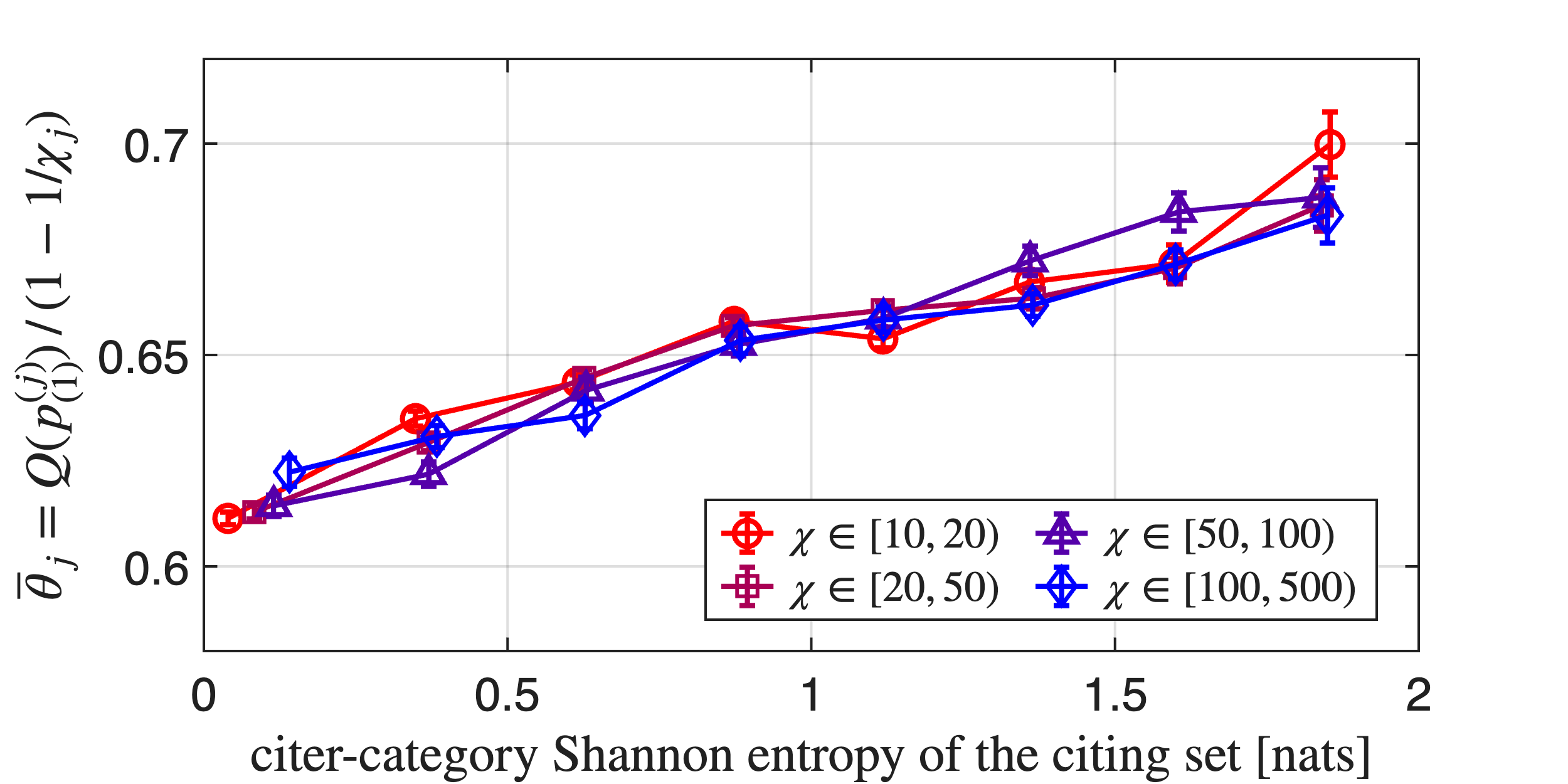}
  \caption{As in Figure \ref{fig:dispersion}, but normalized.}
  \label{fig:dispersionNormalized}
\end{figure}

\begin{table*}[t]
  \centering
  \caption{Narrow- and broad-reach exemplars at three target degrees. For each of two papers in the three target bands $[18,22)$, $[38,42)$, and $[75,85)$, we show the paper's degree $\chi$, the number $\#$ of distinct arXiv categories of citing papers, the cross-field fraction $f_{\times}$, the quadratic entropy $Q$, the primary class, arXiv identifier, and the beginning of the title. The broad-reach papers are respectively a survey, a tutorial, and a design document for the popular \texttt{scikit-learn} library.}
  \label{tab:reach-exemplars}
  \small
  \setlength{\tabcolsep}{5pt}
  \begin{tabular}{@{}rrrllll@{}}
    \toprule
    $\chi$ & $\#$ & $f_{\times}$ &
    $Q$ & class & arXiv ID & title start \\
    \midrule
    \multirow{2}{*}{}
      20 &  1 & 0.00 & 0.40 & \texttt{cs.IT} & \texttt{1705.06350} & Wireless Information and Power Transfer over an AWGN channel\dots
 \\
      21 & 11 & 0.86 & 0.65 & \texttt{cs.LG} & \texttt{cs/0011033} & Web Mining Research: A Survey\\
    \addlinespace
    \multirow{2}{*}{}
      41 &  1 & 0.00 & 0.45 & \texttt{cs.IT} & \texttt{0905.3109} & Interference Channels with Source Cooperation \\
      38 & 14 & 0.76 & 0.65 & \texttt{cs.LG} & \texttt{1404.1100} & A Tutorial on Principal Component Analysis \\
    \addlinespace
    \multirow{2}{*}{}
      78 &  1 & 0.00 & 0.56 & \texttt{cs.CV} & \texttt{1803.09786} & Transferable Joint Attribute-Identity Deep Learning for Unsupervised\dots \\
      78 & 18 & 0.55 & 0.72 & \texttt{cs.LG} & \texttt{1309.0238} & API design for machine learning software: experiences from the scikit-\dots
 \\
    \bottomrule
  \end{tabular}
\end{table*}

Storing the $n\times n$ distance matrix for $n=169343$ requires roughly 57 GB even in half precision, so the dense evaluation of $Q$ is impractical at this scale. By comparison, a bit-packed sign sketch requires roughly a thousand times less storage for $N = 2500$. Using the approach of \S \ref{sec:geodesic} with $N = 2500$ random directions took just 3.76 seconds for the one-time sketch construction on a MacBook Pro (M3 Max), while all $n$ entropies were computed in 18.6 seconds (this was fast because of sparsity). By comparison, just loading the dataset took 1.82 seconds. (Virtually identical results were obtained even faster for $N = 100$.)

\subsection{A larger-scale example}\label{sec:larger}

A second example on the \texttt{ogbn-mag} dataset of $n = 736389$ papers \cite{hu2020open} exhibits dispersion across multiple levels in Figure \ref{fig:families} and extremely concentrated and dispersed fields and institutions in Figure \ref{fig:extremes}. Dispersion across multiple levels of aggregation (e.g., papers, authors, coauthors, fields, and institutions) in Figure \ref{fig:families} is computed very economically from a single sketch by placing uniform distributions on the relevant supports. The single large matrix multiplication involved took under two seconds.

\begin{figure}[htbp]
  \centering
  \includegraphics[trim = 0mm 0mm 0mm 0mm, clip, width=\columnwidth,keepaspectratio]{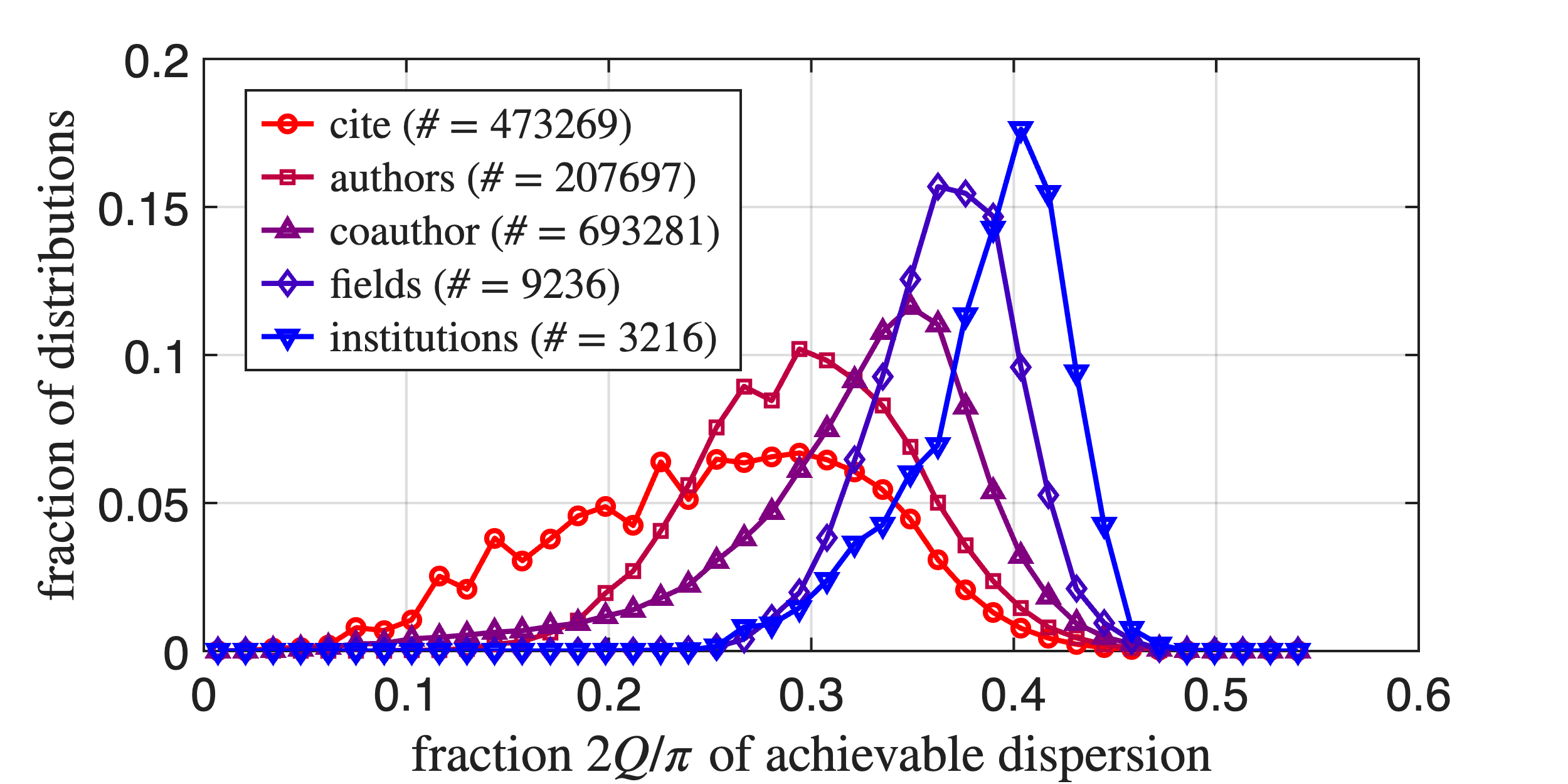}
  \caption{Quadratic entropy/dispersion increases with the level of aggregation.}
  \label{fig:families}
\end{figure}

\begin{figure}[htbp]
  \centering
  \includegraphics[trim = 0mm 0mm 0mm 0mm, clip, width=\columnwidth,keepaspectratio]{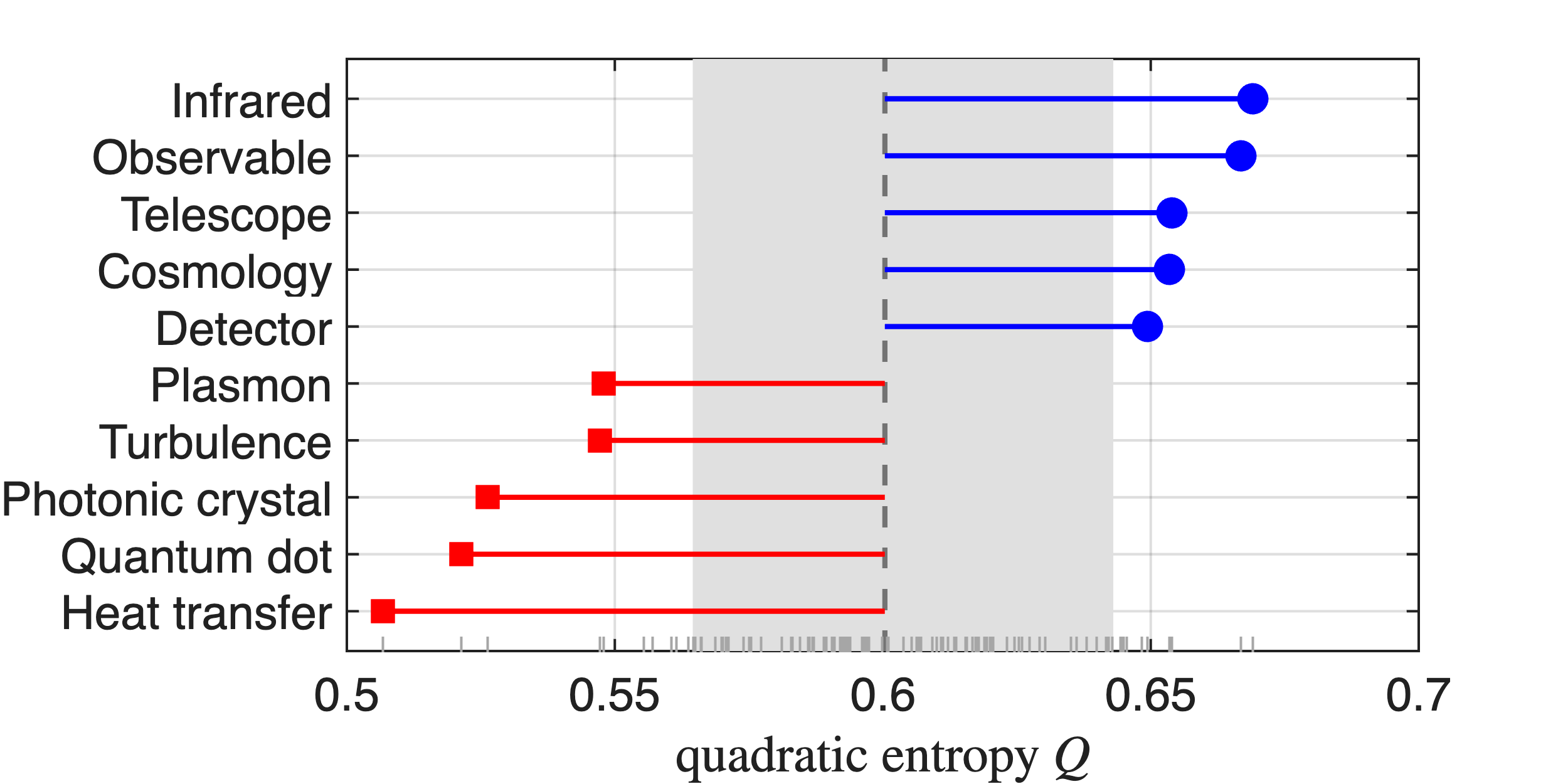}
  \includegraphics[trim = 0mm 0mm 0mm 0mm, clip, width=\columnwidth,keepaspectratio]{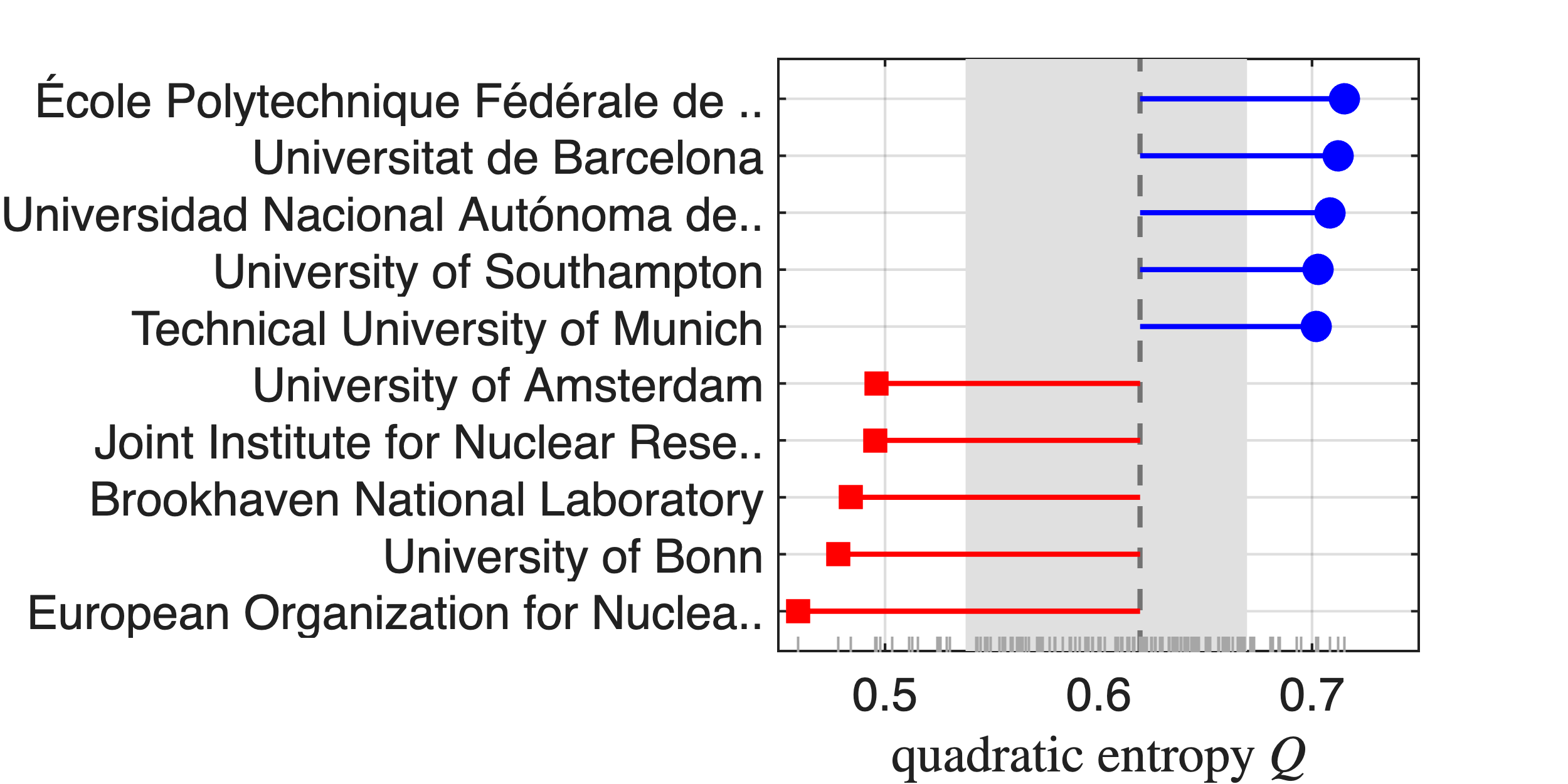}
  \caption{Extremes of {\color{red}concentration (red)} and {\color{blue}dispersion (blue)} among (top) fields at MAG concept hierarchy level 2 and (bottom) institutions, both with 5000 or more papers, with labels provided by OpenAlex \cite{priem2022openalex}. Concentration appears to be correlated with specialization (at least at extremes).}
  \label{fig:extremes}
\end{figure}

\subsection*{AI use statement}

In this work, I used generative AI tools (specifically, Claude Opus 5) to help develop conceptual frameworks, formulate mathematical claims, provide critical ingredients for proving mathematical claims, assist in the writing of proofs, propose or refine hypotheses, design or provide feedback on research methodology or experiments, implement methods, support qualitative and thematic data analysis, and interpret results. 

I have not used generative AI tools to generate synthetic data sets, assist with translation, or clean and reformat datasets.

Additionally, I used generative AI tools to create or modify scientific figures or images, suggest experimental parameters, create or edit software code, draft parts of a research paper, summarize or analyse existing literature, discover research topics or identify gaps, brainstorming, sourcing/searching for information, identify relevant literature, and format references.

I have reviewed all AI-assisted work, by rewriting proofs from scratch, introspecting and plotting Python code outputs in MATLAB, verifying other feedback and products at time of production, and performing independent analyses of data and results. 

In particular, all of the writing and responsibility is mine.

%

\bibliographystyle{plain}
\bibliography{qe}

\clearpage
\appendix
\thispagestyle{empty}

\onecolumn
\aistatstitle{Efficient quadratic entropy with distance sketches: \\
Supplementary Materials}

\section{Proofs}\label{sec:proofs}

\subsection{Proof of Proposition \ref{pro:EuclideanSketch}}

    \begin{proof}
        By standard algebra of Gaussian random variables (see, e.g., \S 14.3 of \cite{wasserman2004all}), $u^T x \sim \mathcal{N}(0,\|x\|^2)$. Using the substitution $t = z^2/2\| x \|^2$, we have
        \begin{align*}
        	\mathbb{E}_u|u^T x| & = \frac{2}{\sqrt{2 \pi \| x \|^2}} \int_0^\infty z e^{-z^2/2\|x\|^2} \ dz \\
    	& = \frac{2 \| x \|^2}{\sqrt{2 \pi \| x \|^2}} \int_0^\infty e^{-t} \ dt  = \sqrt{\frac{2}{\pi}} \cdot \| x \|,
        \end{align*}
        which establishes \eqref{eq:EuclideanSketch}. It follows that 
        \begin{align*}
        	p^T d p & = \sum\nolimits_{j,k} p_j p_k \| x^{(j)} - x^{(k)} \| \\
    	& = \sqrt{\frac{\pi}{2}} \cdot \mathbb{E}_u \sum\nolimits_{j,k} p_j p_k | u^T x^{(j)} - u^Tx^{(k)} |,
        \end{align*}
        which is precisely \eqref{eq:EuclideanSketchQuadraticDiversity}.
    \end{proof}

\subsection{Proof of Proposition \ref{pro:GiniMeanDifference}}

    \begin{proof}
        Because the $v_j$ are sorted,
        \begin{align}
            & \sum\nolimits_{j,k} w_j w_k |v_j - v_k| = 2\sum\nolimits_{j<k} w_j w_k (v_k - v_j) \nonumber \\
            & = 2\sum\nolimits_{j<k} w_j w_k v_k - 2\sum\nolimits_{j<k} w_j w_k v_j \nonumber \\
            & = 2 \sum\nolimits_k w_k v_k \sum\nolimits_{j = 1}^{k-1} w_j - 2\sum\nolimits_j w_j v_j \sum\nolimits_{k=j+1}^n w_k \nonumber \\
            & = 2\sum\nolimits_j w_j v_j \left ( \sum\nolimits_{k = 1}^{j-1} w_k - \sum\nolimits_{k=j+1}^n w_k \right ) \nonumber \\
            & = 2\sum\nolimits_j w_j v_j \left ( 2 \left [ \sum\nolimits_{k = 1}^{j-1} w_k \right ] + w_j - \sum\nolimits_{i=1}^n w_i \right ) \nonumber
        \end{align}
        where the last equality follows from $\sum_{k=j+1}^n w_k = (\sum\nolimits_{k=1}^n w_k) - w_j - \sum\nolimits_{k = 1}^{j-1} w_k$.
    \end{proof}

\subsection{Proof of Theorem \ref{thm:EuclideanEstimator}}
 
    \begin{proof}
    To start, write $x^{(jk)} := x^{(j)} - x^{(k)}$: now
    \begin{align*}
        & | q^\textnormal{euc}(u) - q^\textnormal{euc}(u') | \\
        & = \sqrt{\tfrac{\pi}{2}} \left | \sum\nolimits_{j,k} p_j p_k \left ( |u^T(x^{(jk)})| - |u'^T(x^{(jk)})| \right ) \right | \\
        & \le \sqrt{\tfrac{\pi}{2}} \sum\nolimits_{j,k} p_j p_k  \left | |u^T(x^{(jk)})| - |u'^T(x^{(jk)})| \right | \\
        & \le \sqrt{\tfrac{\pi}{2}} \sum\nolimits_{j,k} p_j p_k  \left | (u-u')^T(x^{(jk)}) \right | \\
        & \le \| u - u' \| \cdot \sqrt{\tfrac{\pi}{2}} \sum\nolimits_{j,k} p_j p_k \left \| x^{(jk)} \right \| \\
        & = \| u - u' \| \cdot \sqrt{\tfrac{\pi}{2}} \cdot  Q.
    \end{align*}
    where the second inequality is from the reverse triangle inequality $| \| a \| - \| b \| | \le \| a - b \|$ and the third is Cauchy-Schwarz. So $q^\textnormal{euc}$ is Lipschitz with constant $\le \sqrt{\frac{\pi}{2}} \cdot Q$. 
    
    The Lipschitz property lifts to $\bar Q_N^\textnormal{euc}(u^{(1)},\dots u^{(N)}) := \frac{1}{N} \sum_{i=1}^N q^\textnormal{euc}(u^{(i)})$:
    \begin{align*}
        & \left | \bar Q_N^\textnormal{euc}(u^{(1)},\dots u^{(N)}) - \bar Q_N^\textnormal{euc}(u'^{(1)},\dots u'^{(N)}) \right | \\
        & \le \frac{1}{N} \sum_{i=1}^N \left |q^\textnormal{euc}(u^{(i)})-q^\textnormal{euc}(u'^{(i)}) \right| \\
        & \le \frac{Q}{N} \sqrt{\frac{\pi}{2}} \sum_{i=1}^N \left \|u^{(i)}-u'^{(i)} \right \|  \\
        & \le Q \sqrt{\frac{\pi}{2N}} \left ( \sum_{i=1}^N \left \|u^{(i)}-u'^{(i)} \right \|^2 \right )^{1/2} 
    \end{align*}
    where the last inequality is Cauchy-Schwarz. We abbreviate this as 
    $$\left | \bar Q_N^\textnormal{euc}(U) - \bar Q_N^\textnormal{euc}(U') \right | \le Q \sqrt{\frac{\pi}{2N}} \cdot \left \| U- U' \right \|$$
    where we can regard $U \sim \mathcal{N}(0,I_{Nm})$. Since $\mathbb{E}_U \bar Q_N^\textnormal{euc} = Q$, the Gaussian concentration inequality (Theorem 5.6 of \cite{boucheron2013concentration}) now yields
    $$\mathbb{P} \left ( \left | \bar Q_N^\textnormal{euc} - Q \right | \ge t  \right ) \le 2\exp(-Nt^2/\pi Q^2).$$
    Substituting $\varepsilon := t/Q$ yields \eqref{eq:EuclideanEstimatorTail}.
    
    Meanwhile, for $v \in \mathbb{R}^m$ and $u \sim \mathcal{N}(0,I_m)$ we have $u^Tv \sim \mathcal{N}(0,\|v\|^2)$, so $\mathbb{E}|u^Tv| = \sqrt{2/\pi} \cdot \|v\|$ by \eqref{eq:EuclideanSketch} while $\mathbb{E}[(u^Tv)^2] = \|v\|^2$. This gives $\mathrm{Var}(|u^Tv|) = (1-2/\pi)\|v\|^2$. Since $\textnormal{Var}(\sum_j \beta_j Y^{(j)}) \le \left ( \sum_j |\beta_j|\sqrt{\textnormal{Var}(Y^{(j)})} \right )^2$ by Minkowski's $L^2$ inequality, 
    \begin{align*}
    \textnormal{Var}(q^\textnormal{euc}) & = \textnormal{Var} \left ( \sqrt{\frac{\pi}{2}} \cdot \sum_{j,k} p_j p_k \left | u^T (x^{(j)}-x^{(k)}) \right | \right ) \\
    & \le \frac{\pi}{2} \left ( \sum_{j,k} p_j p_k \sqrt{\textnormal{Var}\left | u^T (x^{(j)}-x^{(k)}) \right |} \right )^2 \\
    & = \frac{\pi}{2} \left ( \sum_{j,k} p_j p_k d_{jk} \sqrt{1-\frac{2}{\pi}} \right )^2 \\
    & = \left ( \frac{\pi}{2}-1 \right ) Q^2
    \end{align*}
    and $\mathrm{Var}(\bar Q^\textnormal{euc}_N) = \mathrm{Var}(q^\textnormal{euc})/N$.
    \end{proof}
    
\subsection{Proof of Proposition \ref{pro:sphereSignEmbedding}}

    \begin{proof}
        $s_j \in \{\pm 1\}$, $s_j^2 = 1$ and $(s_j-s_k)^2 = 2(1-s_js_k)$, so 
        $\| \psi^{(j)} - \psi^{(k)} \|_{L^2(S^{m-1})}^2 = \frac{\pi}{4} \mathbb{E}_u([s_j-s_k]^2) = \frac{\pi}{2}(1-\mathbb{E}_u[s_js _k]) = \theta_{jk}$,
        where the last equality is \eqref{eq:geodesic1}.
    \end{proof}
    
\subsection{Proof of Theorem \ref{thm:geodesicSketchQuadraticDiversity}}

    \begin{proof}\footnote{Alternatively, use \eqref{eq:variance} on $\sum_j p_j \| \psi^{(j)} \|^2 = \frac{\pi}{4} \sum_j \mathbb{E}_u (s_j^2) = \frac{\pi}{4}$ and $\| \sum_j p_j \psi^{(j)} \|^2 = \frac{\pi}{4} \mathbb{E}_u (s^2)$.
    }
    \begin{align}
        p^T d p & = \sum\nolimits_{j,k} p_j p_k \theta_{jk} \nonumber \\
        & = \frac{\pi}{2} \sum\nolimits_{j,k} p_j p_k(1-\mathbb{E}_u [s_j s_k]) \nonumber \\
        & = \frac{\pi}{2} (1-\mathbb{E}_u [\sum\nolimits_{j,k} p_j p_k s_j s_k]) \nonumber \\
        & = \frac{\pi}{2}(1-\mathbb{E}_u[s^2]). \nonumber
    \end{align}
    \end{proof}    
    
\subsection{Proof of Theorem \ref{thm:geodesicEstimatorAbsolute}}
 
    \begin{proof}
    $q^\textnormal{geo} \in [ 0,\pi/2 ]$, so Hoeffding's inequality (Theorem 2.8 of \cite{boucheron2013concentration}) immediately gives \eqref{eq:geodesicEstimatorTail} and also $\text{Var}(q^\textnormal{geo}) \le \pi^2/16$. Meanwhile, 
    $$\text{Var}(\bar Q_N^\textnormal{geo}) = \frac{1}{N^2} \text{Var} \left ( \sum\nolimits_{i=1}^N q^\textnormal{geo}(u^{(i)})\right ) = \frac{1}{N} \text{Var}(q^\textnormal{geo}).$$ 
    Combining this with $\text{Var}(q^\textnormal{geo}) \le \pi^2/16$ yields \eqref{eq:geodesicEstimatorVariance}.
    \end{proof}

\subsection{Proof of Proposition \ref{prop:sharpvar}}

    \begin{proof}
    Since $s_j(u) \in \{\pm 1\}$, the convex combination $s(u) \in [-1,1]$, so $s^2 \in [0,1]$ and therefore $\mathbb{E}_u(s^4) \le \mathbb{E}_u(s^2)$. It follows that 
    $$\mathrm{Var}(s^2) = \mathbb{E}_u(s^4) - [\mathbb{E}_u(s^2)]^2 \le \mathbb{E}_u(s^2) \cdot [1 - \mathbb{E}_u(s^2)].$$
    Meanwhile, $\mathbb{E}_u(s^2) = 1 - 2Q/\pi$ by Theorem \ref{thm:geodesicSketchQuadraticDiversity}, so
    $$\mathrm{Var}(s^2) \le (1 - 2Q/\pi) \cdot 2Q/\pi.$$
    Since $q^{\textnormal{geo}}(u) = \frac{\pi}{2} (1 - s(u)^2)$, we have 
    $$\mathrm{Var}(q^{\textnormal{geo}}) = \frac{\pi^2}{4}\mathrm{Var}(s^2) \le \frac{\pi}{2}Q-Q^2.$$
    The second inequality in the proposition follows because $\mu(1-\mu) \le 1/4$ for $\mu \in [0,1]$. (Alternatively, note that $\frac{\pi}{2}Q - Q^2$ is maximized over $Q \in [0, \pi/2]$ at $Q = \pi/4$, where it equals $\pi^2/16$.)
    \end{proof}

\subsection{Proof of Lemma \ref{lem:bern}}

    \begin{proof}
    The overarching idea is to apply Theorem \ref{thm:blm} with $\nu := N\sigma^2$ and $c := R/3$. Now \eqref{eq:bcond1} is satisfied. To see that \eqref{eq:bcond2} is satisfied, fix an integer $\ell \ge 3$. Now $(\zeta_i)_+^{\ell} \le R^{\ell-2}\zeta_i^2$ since $\ell - 2 \ge 1$ and $0 \le (\zeta_i)_+ \le R$. Taking expectations and applying \eqref{eq:bcond1} gives that $N \mathbb{E} \left[(\zeta_i)_+^{\ell}\right] \le R^{\ell-2}\nu$, so \eqref{eq:bcond2} follows once $R^{\ell-2}\nu \le \frac{\ell!}{2}\nu\left(R/3\right)^{\ell-2}$. Equivalently, \eqref{eq:bcond2} follows once 
    $2\cdot 3^{\ell-2} \le \ell!$. This inequality holds for $\ell \ge 3$, with equality at $\ell = 3$ (which is where the constant term in $c = R/3$ comes from).
    
    It now remains to pick a good value of $\tau$ and produce a two-sided bound. Towards the first of these goals, Theorem \ref{thm:blm} controls $S := \sum_{i=1}^{N}\zeta_i$ at the $\tau$-dependent level $\sqrt{2\nu\tau} + c\tau$. To control it at the prescribed level $Nr$, set
    $$\tau^{*} := \frac{N r^2}{2\sigma^2 + \frac{2R}{3}r} = \frac{(Nr)^2}{2\left(\nu + cNr\right)};$$
    $$\vartheta := \frac{\nu}{\nu + cNr} = \frac{\sigma^2}{\sigma^2 + \frac{R}{3}r} \in (0,1],$$
    so that $cNr/(\nu + cNr) = 1 - \vartheta$, and hence $\sqrt{2\nu\tau^{*}} = Nr\sqrt{\vartheta}$ and $c\tau^{*} = \frac{Nr}{2}\left(1 - \vartheta\right)$. Completing the square,
    \begin{align*}
    \sqrt{2\nu\tau^{*}} + c\tau^{*} 
      & = Nr \left( \sqrt{\vartheta} + \frac{1-\vartheta}{2} \right) \\
      & = Nr \left( 1 - \frac{(\sqrt{\vartheta}-1)^2}{2} \right)
      \le Nr.
      \end{align*}
    Thus $S \ge Nr \Rightarrow S \ge \sqrt{2\nu\tau^{*}} + c\tau^{*}$, so \eqref{eq:bernstein} applies at $\tau = \tau^{*}$. The result follows by a union bound and sign symmetry of the hypothesis $|\zeta_i| \le R$.
    \end{proof}

\subsection{Proof of Theorem \ref{thm:geodesicEstimatorRelative}}

    \begin{proof}
    Set $\zeta_i := q^{\textnormal{geo}}(u^{(i)}) - Q$, so the $\zeta_i$ are IID and centered with $\frac{1}{N}\sum_{i=1}^{N}\zeta_i = \bar{Q}^{\textnormal{geo}}_N - Q$. We supply the two inputs $R$ and $\sigma^2$ that Lemma \ref{lem:bern} requires. Since $s_j(u) \in \{\pm 1\}$, the convex combination $s(u) = \sum_j p_j s_j(u)$ is in $[-1,1]$, so $|s(u)| \le 1$ and $q^{\textnormal{geo}}(u) = \frac{\pi}{2}\left(1 - s(u)^2\right) \in [0,\pi/2]$. Similarly, $Q = \mathbb{E}_u q^{\textnormal{geo}} \in [0,\pi/2]$, so $|\zeta_i| \le \pi/2 =: R$. Meanwhile, Proposition \ref{prop:sharpvar} gives $\mathbb{E}[\zeta_i^2] = \mathrm{Var}(q^{\textnormal{geo}}) \le \pi Q/2 - Q^2 =: \sigma^2$.
    
    Now with $r = \varepsilon Q$,
    \begin{align*}
    \frac{N r^2}{2\sigma^2 + 2rR/3} 
    & = \frac{N\varepsilon^2 Q^2}
           {2\left(\pi Q/2 - Q^2\right) + \pi\varepsilon Q/3} \\
    & = \frac{N\varepsilon^2 Q}{\pi - 2Q + \pi\varepsilon/3}
    \ge \frac{N\varepsilon^2 Q}{\pi\left(1 + \varepsilon/3\right)}.
    \end{align*}
    Meanwhile, $| \tfrac{1}{N} \textstyle\sum_{i=1}^{N}\zeta_i | \ge r$ is the same as $\left|\bar{Q}^{\textnormal{geo}}_N/Q - 1\right| \ge \varepsilon$. These yield \eqref{eq:geobern}.
    \end{proof}

\subsection{Proof of Lemma \ref{lem:uniformAmortizedEuclidean}}

    \begin{proof}
    Write $g_i := u^{(i)T}(x^{(j)}-x^{(k)})/d_{jk}$, so that $(g_1,\dots,g_N) \sim \mathcal{N}(0,I_N)$ are IID. Since by Proposition \ref{pro:EuclideanSketch}, $d_{jk} = \sqrt{\frac{\pi}{2}} \cdot \mathbb{E}_u | u^T (x^{(j)} - x^{(k)}) |$ for $u \sim \mathcal{N}(0,I_m)$, we have
    \begin{align*}
    (\bar d_N)_{jk} & = \sqrt{\frac{\pi}{2}} \frac{1}{N} \sum_{i=1}^N \left | u^{(i)T}(x^{(j)}-x^{(k)}) \right | \\
    & = \sqrt{\frac{\pi}{2}} d_{jk} \cdot \frac{1}{N} \sum_{i=1}^N \left | g_i \right | \rightarrow d_{jk}.
    \end{align*}
    Further writing here $F(g) \equiv F_N(g_1,\dots,g_N) := \frac{1}{N} \sum_{i=1}^N |g_i|$, we have that 
    $$|F(g)-F(g')| \le \frac{1}{N} \sum_{i=1}^N |g_i-g'_i| \le \frac{1}{\sqrt{N}} \| g-g' \|,$$
    where the second inequality is from Cauchy-Schwarz. The Gaussian concentration inequality (Theorem 5.6 of \cite{boucheron2013concentration}) now yields $\mathbb{P}(|F-\mathbb{E}F| \ge t) \le 2 \exp(-Nt^2/2)$. Setting $\varepsilon := t \cdot \sqrt{\pi/2}$ and using $(\bar d_N)_{jk}/d_{jk} = \sqrt{\pi/2} \cdot F \rightarrow 1$, we get in turn
    $$\mathbb{P}\left ( \left | \frac{(\bar d_N)_{jk}}{d_{jk}}-1 \right | \ge \varepsilon \right ) 
    \le 2\exp \left ( - \frac{N \varepsilon^2}{\pi} \right ).$$
    
    Now a union bound over the $\binom{n}{2}$ unordered pairs $(j,k)$ gives that
    $$\mathbb{P}\left ( \max_{j,k} \left | \frac{(\bar d_N)_{jk}}{d_{jk}} - 1 \right | \ge \varepsilon \right ) 
    \le 2 \binom{n}{2} \cdot \exp \left ( - \frac{N \varepsilon^2}{\pi} \right ),$$
    which also implies that 
    \begin{align*}
    \left | \bar Q^\textnormal{euc}_N(p) - Q(p) \right | & = \left | \sum_{j,k} p_j p_k \left [ (\bar d_N)_{jk} - d_{jk} \right ] \right | \\
    & \le \varepsilon \sum_{j,k} p_j p_k d_{jk} = \varepsilon Q(p),
    \end{align*}
    so
    $$\mathbb{P}\left ( \left | \frac{\bar Q^\textnormal{euc}_N(p)}{Q(p)} - 1 \right | \ge \varepsilon \right ) 
    \le 2 \binom{n}{2} \cdot \exp \left ( - \frac{N \varepsilon^2}{\pi} \right ).$$
    The result follows by solving for $N$.
    \end{proof}

\subsection{Proof of Lemma \ref{lem:uniformAmortizedGeodesic}}

    \begin{proof}
        The random variable $\frac{\pi}{2} (1-s_j(u)s_k(u))$ takes values in $\{0,\pi\}$ with mean $\theta_{jk}$. Using notation along the lines of the preceding proof, Hoeffding's inequality yields $\mathbb{P}(|(\bar \theta_N)_{jk}-\theta_{jk} | \ge \varepsilon) \le 2 \exp(-2N\varepsilon^2/\pi^2)$. A union bound gives in turn 
        $$\mathbb{P} \left ( \max_{j,k} \left |(\bar \theta_N)_{jk}-\theta_{jk} \right | \ge \varepsilon \right ) \le 2 \binom{n}{2} \cdot \exp \left (-\frac{2N\varepsilon^2}{\pi^2} \right )$$
        which is bounded above by $\delta$. 
        Finally, $\left |\bar Q_N^\textnormal{geo}-Q \right |  \le \sum_{j,k} p_j p_k \left | (\bar \theta_N)_{jk} - \theta_{jk} \right |$, so if $\max_{j \ne k} \left |(\bar \theta_N)_{jk}-\theta_{jk} \right | \le \varepsilon$, then $\left |\bar Q^\textnormal{geo}_N-Q \right | \le \varepsilon \sum_{j\ne k}p_jp_k = \varepsilon \cdot (1-\|p\|^2) \le \varepsilon$.
    \end{proof}

\end{document}